\documentclass{article} 
\usepackage{preprint_style,times}

\usepackage{amsmath,amsfonts,bm}

\def\eqref#1{equation~\ref{#1}}

\def\1{\bm{1}}

\DeclareMathAlphabet{\mathsfit}{\encodingdefault}{\sfdefault}{m}{sl}
\SetMathAlphabet{\mathsfit}{bold}{\encodingdefault}{\sfdefault}{bx}{n}

\usepackage{titletoc}
\usepackage{graphicx}
\usepackage{booktabs}
\usepackage{hyperref}
\usepackage{url}
\usepackage{enumitem}
\usepackage{algorithm}
\usepackage{algpseudocode}
\usepackage{amsthm}
\newtheorem{theorem}{Theorem}[section]
\newtheorem{proposition}[theorem]{Proposition}

\title{Optimal Design for Active Preference Learning with Biased LLM Judges}

\author{
Zhongman Du$^{1}$\quad Huiming Zhang$^{1}$\quad Haodong Zhu$^{1,2}$\quad Baochang Zhang$^{1,3}$\\[2pt]
{\normalfont $^{1}$Beihang University\qquad $^{2}$Zhongguancun Academy}\\[2pt]
{\normalfont $^{3}$Hangzhou Innovation Institute of Beihang University}
}

\begin{document}

\maketitle

\begin{abstract}
Learning from human preferences is central to large language model (LLM) alignment, but human preference annotation is costly. Active preference learning reduces this cost by selecting informative comparisons, and LLM judges can provide additional scalable feedback. However, the preferences of the judges may deviate from those of the target human population. Even after calibration on trusted reference data, active acquisition can shift the comparison distribution and expose residual judge bias. We therefore incorporate judge deviations into the acquisition design rather than relying on a separate calibration stage. Under joint estimation, comparisons that appear highly informative about the reward may also reflect judge bias and therefore provide less information about human preferences. To address this issue, we propose Nuisance-Adjusted Optimal Design (NAOD), a comparison-selection strategy that prioritizes policy-relevant target information after nuisance adjustment and uses the Frank--Wolfe algorithm for optimization. Theoretically, we establish a sharp conditional local asymptotic minimax lower bound on policy risk and construct an estimator that attains it. We further characterize the finite-sample cost of learning the nuisance representation and show that representation error can reverse an oracle design advantage. Finally, we validate these predictions experimentally and evaluate NAOD on Chatbot Arena data across 17 judges, 15 budget configurations, and 15 random cluster-level splits. NAOD reduces the mean regret of proxy policy by 29.1\% relative to a matched target-information design, outperforms existing methods, and improves human-preference prediction on held-out data.
\end{abstract}

\section{Introduction}

Preference feedback \citep{christiano2017deep} provides a standard way to align large language models (LLMs) with human preferences when desired behavior is difficult to specify directly \citep{ouyang2022training,rafailov2023direct}. Reward models are trained on pairwise response comparisons and then used to guide downstream policy optimization \citep{stiennon2020learning}. The resulting policy therefore depends not only on how the reward model is trained, but also on which comparisons are selected for feedback. When human annotation is costly or limited, active preference learning and experimental design make more efficient use of a fixed annotation budget by selecting informative comparisons \citep{pmlr-v235-muldrew24a,mukherjee2024optimal}. Recent methods further incorporate reward-model information \citep{lin2026activedpo} or downstream policy objectives \citep{pmlr-v267-feng25g} into their acquisition criteria.

A complementary approach to reducing annotation cost is to obtain preference feedback from LLM judges \citep{pmlr-v235-lee24t}. However, their judgments can differ systematically from the preferences of the target human population \citep{zheng2023judging}. A natural response is to calibrate the judge against trusted human data before relying on its feedback \citep{polo2025bridging}. However, calibration can deteriorate under distribution shift \citep{ovadia2019can,pmlr-v119-chan20a}, while active acquisition itself changes the comparison distribution by selecting which pairs receive feedback \citep{farquhar2021statistical}. Residual judge biases that cancel under the reference calibration distribution can therefore become unbalanced on the acquired distribution. Thus, pair selection determines both how much information is collected and which judge errors can influence the learned reward.

It is additionally difficult to model these judge deviations during estimation. The human reward and judge-specific deviation must now be estimated jointly and distinguished from one another. On some selected pairs, changes in the human reward and changes in the judge deviation can have nearly the same effect on label probabilities. A design that treats all apparent reward information as target information can therefore overstate the precision with which the human target can be estimated.

\begin{figure*}[t]
\centering
\includegraphics[width=\textwidth]{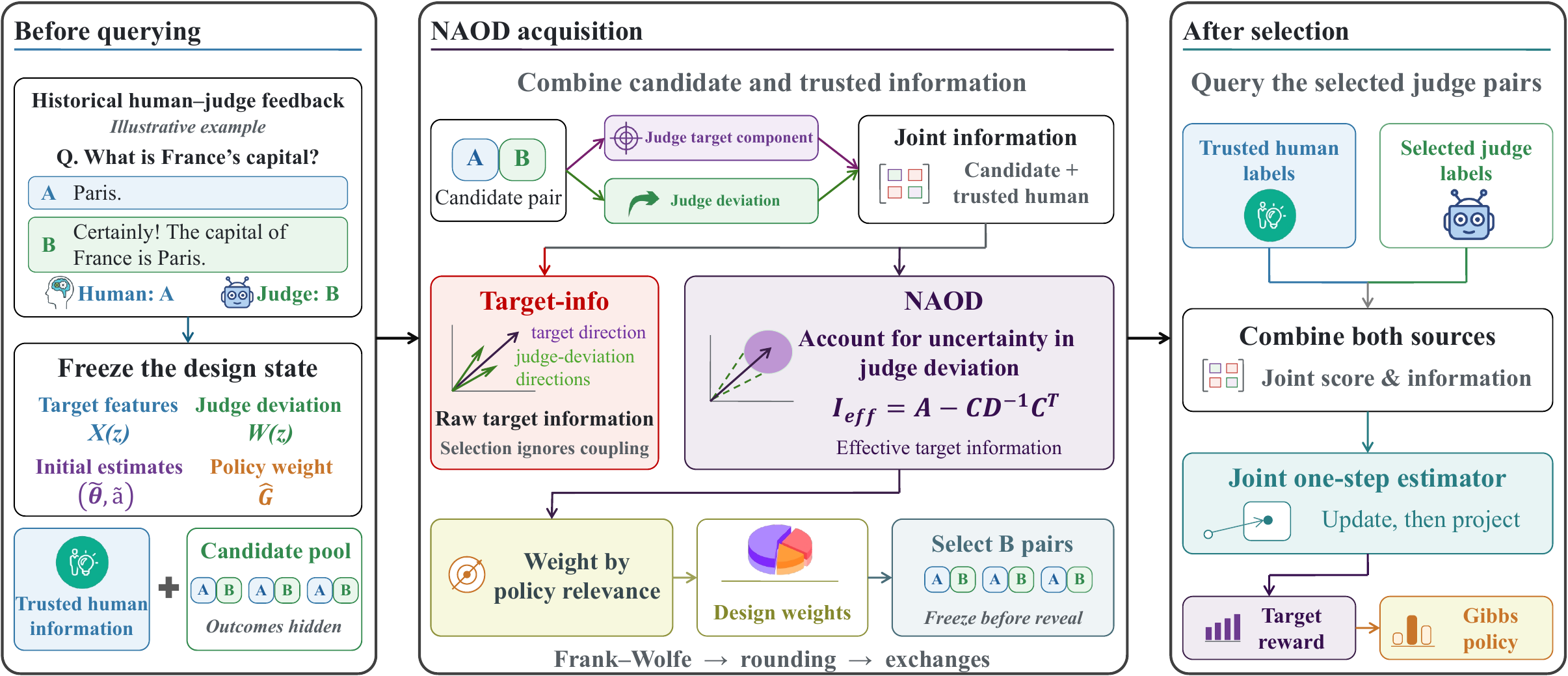}

\caption{\textbf{Overview of the NAOD acquisition and estimation pipeline.} Historical human--judge feedback is used to construct the frozen target and judge-deviation representations. Before current outcomes are observed, NAOD selects comparisons using nuisance-adjusted target information weighted by policy relevance. After selection, trusted human and judge feedback are combined in a joint estimator to update the target reward and induced policy.}
\label{fig:overview}
\vspace{-10pt}
\end{figure*}

For acquisition, what matters is not the raw information about reward parameters, but the information about the human target that remains after judge-specific deviations have been accounted for. Even this remaining information is not equally important across reward directions, because some reward errors substantially alter downstream policy decisions whereas others have little effect on the induced policy. Reward-model accuracy alone therefore need not reflect downstream policy performance \citep{pmlr-v202-gao23h,wen2025rethinking,frick2025evaluate}. We therefore propose Nuisance-Adjusted Optimal Design (NAOD), a design strategy that selects comparisons using nuisance-adjusted target information weighted by downstream policy relevance. Figure~\ref{fig:overview} provides an overview of the NAOD pipeline from pre-acquisition design to post-selection estimation.

A sharp conditional local asymptotic minimax lower bound for policy risk is established, together with an estimator that attains it. The resulting risk coincides with the nuisance-adjusted, policy-weighted objective optimized by NAOD. The same target–nuisance cross-information controls both residual exposure and the loss of target information. Further, when the judge-deviation representation is learned, a design-dependent risk term appears that can reverse oracle design rankings.


Empirically, controlled experiments validate the predicted policy-risk behavior and the design-ranking reversal. On Chatbot Arena, across 17 judges, 15 budget configurations, and 15 random cluster-level splits, NAOD reduces mean proxy policy regret by \(29.1\%\) relative to the matched Target-info design and improves held-out human-preference prediction. Its gains tend to be larger when target–nuisance coupling is stronger, consistent with the mechanism predicted by our theory.

Our contributions can be summarized as follows:
\begin{itemize}
    \item We formulate active preference learning with biased LLM judges as a joint target--nuisance design problem and characterize how acquisition exposes judge residuals.
    
    \item We introduce NAOD, establish its sharp local minimax interpretation, and characterize the additional risk from learning the judge-deviation representation.
    
    \item We develop a practical comparison-selection algorithm for finite candidate pools, validate our theoretical risk predictions in synthetic experiments, and evaluate NAOD on Chatbot Arena with held-out human evaluation.
\end{itemize}

\section{Related Work}
\label{sec-related-work}

\paragraph{Active preference acquisition and nuisance-aware design.}
Active preference learning reduces feedback cost by selecting informative comparisons. Prior work studies adaptive pairwise comparisons \citep{pmlr-v70-maystre17a}, preference-based reinforcement learning \citep{pmlr-v139-lee21i}, query-efficient learning from pairwise preferences \citep{wu2024randomization}, and subset selection for reliable LLM ranking \citep{liusie-etal-2024-efficient}. Information-based methods use Fisher information for reward modeling \citep{pmlr-v267-shen25c} or D-optimal design with randomized Frank--Wolfe optimization \citep{pmlr-v267-thekumparampil25a}, while other work incorporates downstream policy relevance or reward uncertainty into acquisition \citep{hu2024querypolicy,sun2025uncertainty}. When observations jointly inform target and nuisance parameters, acquisition must also account for nuisance uncertainty \citep{sloman2024bayesian}. Our work combines policy-aware preference acquisition with nuisance-aware experimental design under LLM-judge feedback.

\paragraph{LLM judges and calibration.}
LLM judges provide scalable evaluation but exhibit systematic biases, including position effects \citep{wang-etal-2024-large-language-models-fair}, stylistic preferences \citep{feuer2025style}, and broader recurring biases across human and model evaluators \citep{chen-etal-2024-humans,ye2025justice}. Related work also studies pairwise feedback from systematically biased evaluators \citep{tang2025biased}. Existing mitigation approaches characterize limits of debiasing with limited high-quality labels \citep{dorner2025limits}, fine-tune evaluators on debiased data \citep{park-etal-2024-offsetbias}, optimize prompts \citep{zhou-etal-2024-fairer}, calibrate pairwise prediction distributions \citep{li-etal-2025-calibraeval}, or use human calibration data to correct judge estimates and allocate calibration samples \citep{lee2026correctly}. Our focus is instead on how judge discrepancies should change which comparisons are acquired for human reward estimation.

\section{Problem Setting}
\label{sec-problem}

\paragraph{The Bradley-Terry model of human preferences.}
Let \(z=(s,y_A,y_B)\) denote an ordered comparison, where \(s\) is a prompt and \(y_A,y_B\) are two candidate responses. Let \(\phi(s,y)\in\mathbb{R}^d\) be a fixed \(d\)-dimensional feature map. We model the reward as linear in this feature representation, \(r_\theta(s,y)=\phi(s,y)^\top\theta\), where \(\theta\in\mathbb{R}^d\) is the reward parameter. We define the comparison feature \(X(z)=\phi(s,y_A)-\phi(s,y_B)\). Let \(Y\in\{0,1\}\) denote the human preference label, with \(Y=1\) indicating that \(y_A\) is preferred to \(y_B\). Under the Bradley-Terry model \citep{bradley1952rank}, the preference probability is determined by the reward difference between the two responses,
\begin{equation}
    p_\theta(z)
    =
    \mathbb{P}_\theta(Y=1\mid z)
    =
    \sigma\!\left(
        r_\theta(s,y_A)-r_\theta(s,y_B)
    \right)
    =
    \sigma\!\left(X(z)^\top\theta\right),
    \label{eq-human-preference}
\end{equation}
where \(\sigma(u)=(1+e^{-u})^{-1}\) is the sigmoid function. We write \(\theta^\star\) for the reward parameter of the target human population and \(p^\star=p_{\theta^\star}\) for its preference probability.

\paragraph{Performance metric.}
We evaluate the learned reward through the policy it induces. Let \(\mu_e\) denote the evaluation distribution over prompts. For each prompt \(s\), let \(\mathcal{A}(s)\) be a finite set of candidate responses and let \(\pi_{\mathrm{ref}}(\cdot\mid s)\) be a reference policy on \(\mathcal{A}(s)\) with full support. Given a regularization parameter \(\tau>0\), let \(\pi_\theta(\cdot\mid s)\) denote the optimizer of the KL-regularized reward objective \citep{azar2012dynamic,liu2024provably}. Its closed form is
\begin{equation}
    \pi_\theta(y\mid s)
    =
    \frac{
        \pi_{\mathrm{ref}}(y\mid s)
        \exp\!\left(r_\theta(s,y)/\tau\right)
    }{
        \sum_{y'\in\mathcal{A}(s)}
        \pi_{\mathrm{ref}}(y'\mid s)
        \exp\!\left(r_\theta(s,y')/\tau\right)
    }.
    \label{eq-induced-policy}
\end{equation}
The corresponding optimal regularized value is
\begin{equation}
    F(\theta)
    =
    \tau\,
    \mathbb{E}_{s\sim\mu_e}
    \log
    \sum_{y\in\mathcal{A}(s)}
    \pi_{\mathrm{ref}}(y\mid s)
    \exp\!\left(r_\theta(s,y)/\tau\right).
    \label{eq-policy-value}
\end{equation}
When the target reward parameter is \(\theta^\star\), deploying the policy induced by \(\theta\) incurs the loss
\begin{equation}
    D_F(\theta^\star,\theta)
    =
    F(\theta^\star)-F(\theta)
    -\nabla F(\theta)^\top(\theta^\star-\theta)
    =
    \tau\,
    \mathbb{E}_{s\sim\mu_e}
    \mathrm{KL}\!\left(
        \pi_\theta(\cdot\mid s)
        \,\|\, 
        \pi_{\theta^\star}(\cdot\mid s)
    \right).
    \label{eq-policy-loss}
\end{equation}
See Appendix~\ref{app-foundations-notation} for derivations.
Thus, our performance metric measures error through its effect on the induced policy.

\paragraph{LLM judge and active acquisition.}

For a fixed LLM judge and annotation protocol, let \(q(z)\in[0,1]\) denote its preference probability for comparison \(z\). Its preferences need not match those of the target human population. We define the judge discrepancy on the probability scale as \(\delta(z)=q(z)-p^\star(z)\). Let \(\nu\) denote a reference distribution over comparisons and let \(Z\sim\nu\). Conventional probability calibration with respect to the target human population requires \(\mathbb{E}_{\nu}[Y\mid q(Z)]=q(Z)\) \citep{pmlr-v70-guo17a,futami2024information,chidambaram2025reassessing}. Under the target preference model in \eqref{eq-human-preference}, this is equivalent to
\[
    \mathbb{E}_{\nu}[\delta(Z)\mid q(Z)] = 0.
\]
Thus, under \(\nu\), calibration constrains the conditional mean of the judge discrepancy given \(q(Z)\). However, active acquisition changes the comparison distribution. Let \(\xi\) denote the acquired comparison distribution and assume \(\xi\ll\nu\), so that the density ratio \(\omega(z)=d\xi/d\nu(z)\) is well defined.

\paragraph{Acquisition-dependent residual exposure.}

Probability calibration is defined under the reference distribution \(\nu\) and is not guaranteed to hold under the acquired distribution \(\xi\) \citep{pmlr-v108-park20b}. For reward estimation, the relevant issue is how judge discrepancies align with the reward-feature directions. Under the acquired distribution \(\xi\), define the residual contribution to the target score as \(b_\xi=\mathbb{E}_{\xi}[\delta(Z)X(Z)]\), and define \(b_\nu=\mathbb{E}_{\nu}[\delta(Z)X(Z)]\) analogously under the reference distribution. This residual-feature moment can be viewed as a multiaccuracy-style residual moment over linear reward-feature tests \citep{kim2019multiaccuracy,kern2024multiaccurate}. A change of measure gives
\begin{equation}
    b_\xi
    =
    b_\nu
    +
    \mathbb{E}_{\nu}
    \left[
        \{\omega(Z)-1\}\delta(Z)X(Z)
    \right].
    \label{eq-residual-exposure}
\end{equation}
The second term captures how acquisition reweights judge discrepancies across reward-feature directions. Calibration does not imply \(b_\nu=0\), and even when \(b_\nu=0\), acquisition can yield \(b_\xi\neq0\). In that case, treating judge feedback as target feedback shifts the population score away from zero at \(\theta^\star\), potentially moving the fitted reward away from the human target. Appendix~\ref{app-exposure} gives the corresponding parameter and policy-regret consequences.

Thus, active acquisition can expose judge discrepancies that are balanced under the reference distribution. This motivates accounting for judge deviations directly in the acquisition design rather than relying on reference calibration alone.

\section{Nuisance-Adjusted Optimal Design}
\label{sec-naod}

\subsection{Joint Target-Nuisance Model}
\label{sec-joint-model}

We model human and judge preferences jointly. Let \(W(z)\in\mathbb{R}^r\) be a fixed representation of judge deviations, where \(r\) is its dimension, and let \(a\in\mathbb{R}^r\) be the associated nuisance parameter. We model the fixed judge preference probability \(q(z)\) introduced in Section~\ref{sec-problem} using a Bradley--Terry model augmented with a structured judge-specific component \citep{movva2026whats},
\begin{equation}
    q_{\theta,a}(z)
    =
    \sigma\!\left(
        X(z)^\top\theta+W(z)^\top a
    \right).
    \label{eq-joint-judge-model}
\end{equation}
Human labels follow the preference model \(p_\theta\) in \eqref{eq-human-preference}, while \(W(z)^\top a\) represents the difference between judge and human log odds. For the theory in this section, we assume that \(q(z)=q_{\theta^\star,a^\star}(z)\) for some unknown nuisance parameter \(a^\star\). Thus, \(W(z)^\top a^\star\) parameterizes the judge-human discrepancy on the log-odds scale, while \(\delta(z)\) measures the corresponding discrepancy on the probability scale. Throughout this section, \(W\) is treated as fixed. Section~\ref{sec-representation-cost} studies the additional error when \(W\) is learned from historical human and judge feedback.

For the approximate-design formulation, we first consider a finite support of \(L\) comparison types \(z_1,\ldots,z_L\). Write \(X_i=X(z_i)\) and \(W_i=W(z_i)\). At type \(i\), we collect \(n_i\) judge labels, each indicating a preference for the first response with probability \(q_{\theta,a}(z_i)\). An independent trusted sample contains \(n_c\) human labels, whose preference probabilities are given by \(p_\theta\). Conditional on the comparison features, all current labels are independent Bernoulli observations. Let \(n=\sum_{i=1}^L n_i\) be the total number of judge labels. We represent the asymptotic allocation by \(\xi=(\xi_1,\ldots,\xi_L)\), where \(n_i/n\to\xi_i\), \(\xi_i\ge0\), and \(\sum_{i=1}^L\xi_i=1\). We also assume that \(n_c/n\to\kappa\) for some \(\kappa\in[0,\infty)\). We condition on the historical data used to construct \(W\) and select the allocation. These data are independent of the current labels.

\subsection{Effective Target Information}
\label{sec-effective-information}

Joint estimation must separate target-score variation from nuisance-score variation \citep{sloman2024bayesian,barker2001efficiency}. Let \(k=d+r\) be the joint parameter dimension. Define \(\gamma=(\theta^\top,a^\top)^\top\in\mathbb{R}^k\) and the corresponding feature \(v_i=(X_i^\top,W_i^\top)^\top\). Fix a local analysis center \(\gamma_0=(\theta_0^\top,a_0^\top)^\top\), where \(\theta_0\) and \(a_0\) are the target and nuisance components, respectively. The Bernoulli variance of the judge label at this center is \(t_i=q_{\theta_0,a_0}(z_i)\{1-q_{\theta_0,a_0}(z_i)\}\), and one judge label contributes Fisher information \(J_i=t_i v_i v_i^\top\). Let \(H_c\) denote the limiting Fisher information per trusted label at \(\theta_0\). Let \(X_c\) denote a random comparison feature in the trusted sample. Then \(H_c=\mathbb{E}[\sigma'(X_c^\top\theta_0)X_cX_c^\top]\). Let \(A_\kappa(\xi)\), \(D_\xi\), and \(C_\xi\) denote the target, nuisance, and cross-information blocks, respectively. The joint information normalized by the judge sample size is
\begin{equation}
    M_\kappa(\xi)
    =
    \kappa\operatorname{diag}(H_c,0_{r\times r})
    +
    \sum_{i=1}^L\xi_i J_i
    =
    \begin{pmatrix}
        A_\kappa(\xi) & C_\xi\\
        C_\xi^\top & D_\xi
    \end{pmatrix}.
    \label{eq-joint-information}
\end{equation}
Here, \(A_\kappa(\xi)\) is the target-information block, \(D_\xi\) is the nuisance-information block, and \(C_\xi\) is the target--nuisance cross-information block. Specifically, \(A_\kappa(\xi)=\kappa H_c+\sum_{i=1}^L\xi_i t_i X_iX_i^\top\), \(C_\xi=\sum_{i=1}^L\xi_i t_i X_iW_i^\top\), and \(D_\xi=\sum_{i=1}^L\xi_i t_i W_iW_i^\top\). When \(M_\kappa(\xi)\) is positive definite, the effective information for the human reward parameter is
\begin{equation}
    I_{\mathrm{eff},\kappa}(\xi)
    =
    A_\kappa(\xi)-C_\xi D_\xi^{-1}C_\xi^\top.
    \label{eq-effective-information}
\end{equation}
Estimating the nuisance parameter reduces the information available for \(\theta\) from \(A_\kappa(\xi)\) to the Schur complement \(I_{\mathrm{eff},\kappa}(\xi)\), with information loss \(C_\xi D_\xi^{-1}C_\xi^\top\) \citep{fewster2013information}. The inverse of \(I_{\mathrm{eff},\kappa}(\xi)\) is the target block of \(M_\kappa(\xi)^{-1}\).

The cross-information also connects this adjustment to the residual exposure in \eqref{eq-residual-exposure}. Holding the human parameter at \(\theta_0\), define \(b_\xi(a)=\sum_{i=1}^L\xi_i\{q_{\theta_0,a}(z_i)-p_{\theta_0}(z_i)\}X_i\). Let \(C_\xi(a)\) denote the cross-information evaluated at \((\theta_0,a)\), so that \(C_\xi(a_0)=C_\xi\). Differentiation gives the exact Jacobian identity
\begin{equation}
    \frac{\partial b_\xi(a)}{\partial a^\top}
    =
    C_\xi(a).
    \label{eq-exposure-coupling}
\end{equation}
The same cross-information governs the sensitivity of the exposed target score and the information loss from nuisance estimation. See Appendices~\ref{app-foundations-lan} and~\ref{app-foundations-efficient} for the statistical foundations of the joint and effective information, and Appendices~\ref{app-exposure} and~\ref{app-coupling} for the residual-exposure and target--nuisance coupling results.

\subsection{Policy-Weighted Design and Minimax Risk}
\label{sec-policy-design}

Effective information describes how precisely the human reward can be estimated after nuisance adjustment. The policy regret in \eqref{eq-policy-loss} determines which estimation errors matter. Define the policy curvature at \(\theta_0\) by
\[
    G_0
    =
    \nabla^2F(\theta_0)
    =
    \frac{1}{\tau}
    \mathbb{E}_{s\sim\mu_e}
    \operatorname{Cov}_{y\sim\pi_{\theta_0}(\cdot\mid s)}
    [\phi(s,y)].
\]
For \(h\in\mathbb{R}^d\), the local regret satisfies
\(D_F(\theta_0,\theta_0+h)=\tfrac12 h^\top G_0h+o(\|h\|^2)\) as \(h\to0\). Thus, \(G_0\) weights reward-parameter directions by their effect on the induced policy. For a reward estimator \(\widehat\theta_n\), we measure policy risk by \(\mathbb{E}[D_F(\theta^\star,\widehat\theta_n)]\). Let \(G_e=\operatorname{diag}(G_0,0_{r\times r})\), and let \(\operatorname{tr}\) denote the matrix trace. Combining policy curvature with effective target information gives
\begin{equation}
    \Phi_\kappa(\xi)
    =
    \frac12\operatorname{tr}\!\left(
        G_0 I_{\mathrm{eff},\kappa}(\xi)^{-1}
    \right)
    =
    \frac12\operatorname{tr}\!\left(
        G_e M_\kappa(\xi)^{-1}
    \right).
    \label{eq-policy-design-criterion}
\end{equation}
This is a weighted optimal design criterion for the target parameter \citep{pukelsheim2006optimal}. Let \(\Xi\) be a nonempty compact convex set of allocations on the \(L\) comparison types, and assume that \(M_\kappa(\xi)\) is uniformly positive definite over \(\Xi\). We define nuisance-adjusted optimal design (NAOD) by
\[
    \xi^\star
    \in
    \operatorname*{arg\,min}_{\xi\in\Xi}
    \Phi_\kappa(\xi).
\]
Since \(M_\kappa(\xi)\) is affine in \(\xi\), the objective is convex. To state its local minimax characterization, let \(\eta=(h^\top,\upsilon^\top)^\top\), where \(\upsilon\in\mathbb{R}^r\), and define \(\gamma_{n,\eta}=\gamma_0+\eta/\sqrt{n}\), with target component \(\theta_{n,h}=\theta_0+h/\sqrt{n}\). Write \(\mathbb{E}_\eta\) for expectation under this local model, conditional on the historical data.

\begin{theorem}[Conditional local policy risk]
\label{thm-local-policy-risk}
Fix \(\xi\) and condition on the historical data. Under the regularity conditions in Appendix~\ref{app-foundations-notation}, suppose \(M_\kappa(\xi)\succ0\). If \(G_0\succ0\), then
\begin{equation}
    \lim_{R\to\infty}
    \liminf_{n\to\infty}
    \inf_{T_n}
    \sup_{\|\eta\|\le R}
    n\,\mathbb{E}_\eta
    D_F(\theta_{n,h},T_n)
    \ge
    \Phi_\kappa(\xi),
    \label{eq-local-risk-lower-bound}
\end{equation}
where \(T_n\) ranges over reward estimators based on the current observations. Under the additional trusted-sample and initialization conditions in Appendix~\ref{app-minimax}, with \(\kappa>0\), the projected one-step estimator defined there attains this bound uniformly on every fixed local ball.
\end{theorem}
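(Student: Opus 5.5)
The lower bound follows from Hájek--Le Cam local asymptotic minimax theory for the joint parameter \(\gamma\). The loss is reduced to a local quadratic form in the target component only, and the target block of \(M_\kappa(\xi)^{-1}\) is identified with \(I_{\mathrm{eff},\kappa}(\xi)^{-1}\). Attainment requires a \(\sqrt n\)-consistent pilot estimator and a Newton-type correction that is regular, so that its limit law is \(N(0,M_\kappa(\xi)^{-1})\) uniformly over local balls.

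\textbf{Step 1 (LAN).} Conditional on the historical data, the current labels are independent Bernoulli variables: \(n_i\) judge labels with logit \(v_i^\top\gamma\), and \(n_c\) trusted labels with logit \(X_c^\top\theta\). Expanding the log-likelihood ratio between \(\gamma_{n,\eta}\) and \(\gamma_0\) gives
\[
\log\frac{dP_{n,\eta}}{dP_{n,0}}=\eta^\top\Delta_n-\tfrac12\eta^\top M_\kappa(\xi)\eta+o_P(1),
\qquad \Delta_n\rightsquigarrow N(0,M_\kappa(\xi)).
\]
This uses \(n_i/n\to\xi_i\), \(n_c/n\to\kappa\), the Lindeberg CLT (scores are bounded since features are bounded under the regularity conditions), and the law of large numbers for the trusted Hessian toward \(H_c\). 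The experiments are therefore LAN with information \(M_\kappa(\xi)\succ0\).

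\textbf{Step 2 (loss reduction).} By the quadratic expansion of \(D_F\) at \(\theta_0\), with \(G_0\) continuous near \(\theta_0\), we have \(nD_F(\theta_{n,h},T_n)=\tfrac12 \|G_0^{1/2}(\sqrt n(T_n-\theta_0)-h)\|^2+o(1)\) on events where \(\sqrt n(T_n-\theta_0)\) is bounded. To make this rigorous for arbitrary \(T_n\), first truncate: replace the loss by \(\min\{nD_F,M\}\). Since \(D_F\ge0\), truncation only lowers the left side of \eqref{eq-local-risk-lower-bound}. Then apply the local asymptotic minimax theorem (van der Vaart, Thm.~8.11 style) to the bowl-shaped truncated loss \(\ell_M(u)=\min\{\tfrac12 u^\top G_e u,M\}\) acting on the \(\gamma\)-scale error. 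Since \(G_e\) vanishes on nuisance coordinates, this loss depends only on the target error. The theorem bounds the limit by \(\mathbb{E}\,\ell_M(N(0,M_\kappa(\xi)^{-1}))\), where the target block is \(I_{\mathrm{eff},\kappa}(\xi)^{-1}\) by the block inverse formula. Letting \(M\to\infty\) and using monotone convergence gives \(\tfrac12\operatorname{tr}(G_0 I_{\mathrm{eff},\kappa}^{-1})=\Phi_\kappa(\xi)\).

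The main technical obstacle is justifying the quadratic approximation of \(D_F\) uniformly when \(T_n\) is not localized. I would handle it by two devices:
\begin{itemize}
\item defining the reduced loss through \(\ell_M\) of the rescaled error, using the uniform-in-compact expansion of \(D_F\) near \(\theta_0\);
\item noting that on \(\{\|\sqrt n(T_n-\theta_0)\|>K\}\) the true loss is at least \(c\,K^2\) by local strong convexity of \(D_F\) (from \(G_0\succ0\)). Hence the truncated loss never exceeds the true one asymptotically.
\end{itemize}
Here \(G_0\succ0\) is used.

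\textbf{Step 3 (attainment).} Take a pilot \(\tilde\gamma_n\) as follows. First, \(\tilde\theta_n\) is the trusted-sample MLE, which is \(\sqrt n\)-consistent because \(\kappa>0\) and \(H_c\succ0\). Second, \(\tilde a_n\) is obtained by judge-label regression with offset \(X_i^\top\tilde\theta_n\), or is supplied by the initialization condition. Discretize the pilot (Le Cam's trick) or assume the appendix initialization conditions. Then set
\[
\hat\gamma_n=\tilde\gamma_n+\widehat M_n^{-1}S_n(\tilde\gamma_n)/n,
\]
with projection onto the parameter set, which is asymptotically inactive. Standard one-step arguments using the LAN expansion and continuity of the information show that \(\sqrt n(\hat\gamma_n-\gamma_{n,\eta})\rightsquigarrow N(0,M_\kappa^{-1})\) under every \(\gamma_{n,\eta}\), uniformly for \(\|\eta\|\le R\). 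Convergence of the risk \(n\mathbb{E}_\eta D_F\) (rather than just in distribution) then needs uniform integrability. I would get it from the projection onto a compact set, which bounds \(D_F\) by a constant, combined with exponential tail bounds for Bernoulli score sums, which control \(n\|\hat\theta_n-\theta_{n,h}\|^2\) in \(L^{1+\epsilon}\). This yields the limit \(\Phi_\kappa(\xi)\) uniformly on the ball.
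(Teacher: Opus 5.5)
Your proposal is correct, but your lower bound uses a different mechanism from the paper's. You invoke the H\'ajek--Le Cam local asymptotic minimax theorem for the LAN experiment of Proposition~\ref{prop:app-lan}, applied to the bounded bowl-shaped loss $\ell_M$, and then let $M\to\infty$ by monotone convergence. The paper does not appeal to the minimax theorem. It projects the rescaled estimator onto a fixed $G_0$-metric ball, so that $nD_F\ge\frac{1-\varepsilon}{2}\|\overline T_n-h\|_{G_0}^2$ with a bounded error $E_n=\overline T_n-P_\theta\eta$. It then applies a van Trees (Bayesian Cram\'er--Rao) inequality with a smooth product prior on $(-t,t)^k$, giving $\mathbb E[E_nE_n^\top]\succeq P_\theta J_n^{-1}P_\theta^\top$ with $J_n=\mathbb E_\eta\mathcal I_n(\eta)+\pi^2t^{-2}I_k$, and finally sends $n\to\infty$, $t\to\infty$, $\varepsilon\downarrow0$.

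The paper's route is elementary and explicit at each $n$: it needs only an integration-by-parts covariance inequality, with no asymptotic representation theorem or Anderson lemma, and it handles the quadratic loss directly. It does, however, require the local Fisher information $\mathcal I_n(\eta)$ to converge to $M_\kappa(\xi)$ uniformly on the prior support. That is a regularity requirement beyond LAN, though it is automatic here for the logistic model. Your route needs only LAN and covers any bowl-shaped loss, at the price of heavier machinery. You should make three points explicit:
\begin{itemize}
\item cite the triangular-array LAN version of the theorem rather than the i.i.d.\ Theorem~8.11;
\item note that its ``$\sup$ over finite sets'' form implies the stated form, with $\inf_{T_n}$ inside the $\liminf$, by a diagonal argument;
\item justify the positive loss for $T_n$ outside a fixed neighborhood of $\theta_0$ by global convexity of $F$ combined with curvature near the first Bregman argument, via $D_F(x,y)=\int_0^1(1-s)(x-y)^\top\nabla^2F(y+s(x-y))(x-y)\,ds$. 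Local strong convexity alone is not enough, since $D_F(x,y)$ need not grow with $\|y-x\|$ for the Gibbs policy.
\end{itemize}

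Your attainment construction matches the paper's: trusted MLE, offset nuisance fit, one Newton step, projection. The paper obtains uniform integrability from $L^4$ bounds, an exact Newton identity with Lipschitz Hessians, and an explicit spectral guard. Your combination of a bounded projected loss with exponential tail bounds giving $L^{1+\epsilon}$ control does the same job, provided the Newton step's Hessian inverse is guarded or shown to be well conditioned with overwhelming probability. Le Cam discretization is unnecessary here because the log-likelihood is smooth.
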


Thus, for the projected one-step estimator,
\(\mathbb{E}_\eta D_F(\theta_{n,h},\widehat\theta_n)
=\Phi_\kappa(\xi)/n+o(n^{-1})\)
uniformly over every fixed local ball. The case \(G_0\succeq0\) is handled by removing policy-null directions.

A matched Target-info design replaces \(I_{\mathrm{eff},\kappa}(\xi)\) by \(A_\kappa(\xi)\) in \eqref{eq-policy-design-criterion} while retaining the same final joint estimator. The two criteria can select different designs, and Proposition~\ref{prop-design-separation} shows that Target-info can incur strictly larger leading policy risk.

\subsection{Finite-Pool Comparison Selection}
\label{sec-finite-pool}

We now translate the approximate design into the selection of \(B\) distinct comparisons from a finite pool of \(N\) candidates. Let \(x_i\in[0,1]\) be the relaxed inclusion weight of candidate \(i\), and write \(x=(x_1,\ldots,x_N)\). Before querying the pool, we evaluate the information atoms and policy weight at a center fitted from historical feedback, yielding \(\widehat J_i\) and \(\widehat G_e\). Let \(\widehat I_c\) denote the corresponding total trusted information, embedded in the joint parameter space. The fitted information under \(x\) is \(\widehat{\mathcal I}(x)=\widehat I_c+\sum_{i=1}^N x_i\widehat J_i\).

Let \(\mathcal C\) partition the candidate indices into groups from which at most one comparison may be selected. Without group restrictions, each candidate forms its own group. Let \(S_0\) be a fixed feasible seed, possibly empty, used when needed to ensure positive definite fitted information. Its comparisons count toward the budget \(B\). The relaxed feasible set is
\[
    \mathcal X_B
    =
    \left\{
        x\in[0,1]^N
        \;\middle|\;
        \sum_{i=1}^N x_i=B,\quad
        x_i=1\ \text{for }i\in S_0,\quad
        \sum_{i\in c}x_i\le1\ \text{for }c\in\mathcal C
    \right\}.
\]
We solve
\begin{equation}
    \min_{x\in\mathcal X_B} f_B(x),
    \qquad
    f_B(x)
    =
    \frac12\operatorname{tr}\!\left[
        \widehat G_e\widehat{\mathcal I}(x)^{-1}
    \right].
    \label{eq-finite-pool-criterion}
\end{equation}
For \(\xi_i=x_i/B\) and \(\kappa=n_c/B\), this is the fitted finite-pool counterpart of \(\Phi_\kappa(\xi)/B\), preserving the total-information scaling of policy risk.

We solve the convex relaxation using the Frank--Wolfe algorithm \citep{pmlr-v28-jaggi13}, followed by feasible largest-weight rounding and improving exchanges. Algorithm~\ref{alg-naod} in Appendix~\ref{app-finite-pool-optimization} gives the complete acquisition procedure, and the same appendix provides a computable optimization certificate for the returned subset.

Because rounding need not preserve the relaxed allocation proportions, the statistical guarantee is stated in terms of the information of the selected comparisons themselves. For an asymptotic sequence of selected sets \(S_B\) containing \(B\) distinct comparisons selected before their outcomes are observed, let \(J_{Bi}\) denote the true information of candidate \(i\) at \(\gamma_0\), and let \(I^{\mathrm{exp}}_{c,B}\) denote the total expected trusted information. Define the normalized information of the selected set by \(M_B=B^{-1}\!\left(I^{\mathrm{exp}}_{c,B}+\sum_{i\in S_B}J_{Bi}\right)\). Let \(\widehat\theta_B\) be the target component of the projected one-step estimator based on the trusted sample and the selected judge labels.

\begin{theorem}[Policy risk for distinct comparisons]
\label{thm-distinct-comparisons}
Condition on the historical data, candidate features, and selected set. Under the distinct-comparison regularity conditions in Appendix~\ref{app-distinct-comparisons}, suppose \(M_B\) is uniformly positive definite. Under the local model \(\gamma_0+\eta/\sqrt B\),
\begin{equation}
    B\,\mathbb{E}_\eta
    D_F\!\left(
        \theta_0+\frac{h}{\sqrt B},
        \widehat\theta_B
    \right)
    -
    \frac12\operatorname{tr}(G_eM_B^{-1})
    \longrightarrow0
    \label{eq-distinct-policy-risk}
\end{equation}
uniformly over bounded local parameters. If \(M_B\) converges to a positive definite limit, the corresponding conditional local minimax lower bound also holds.
\end{theorem}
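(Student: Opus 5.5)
The plan is to reduce Theorem~\ref{thm-distinct-comparisons} to the fixed-design argument behind Theorem~\ref{thm-local-policy-risk}. The difference is that the design is now a triangular array of distinct, independent but non-identically distributed Bernoulli observations, not an i.i.d.\ sample from \(\xi\). Throughout I condition on the historical data, the candidate features and \(S_B\). Since \(S_B\) is chosen before any outcomes are seen, the current labels are independent Bernoulli variables with fixed features. The proof has three stages: a LAN expansion for this array, a uniform expansion of the projected one-step estimator, and a transfer of that expansion to policy risk.

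\emph{Step 1 (LAN for the array).} Under \(\gamma_0+\eta/\sqrt B\), I would expand the log-likelihood ratio as
\[
\eta^\top\Delta_B-\tfrac12\eta^\top M_B\eta+o_P(1).
\]
The score is \(\Delta_B=B^{-1/2}\sum_{j}(Y_j-p_j)u_j\), summing over trusted and selected judge labels with their joint features \(u_j\). Its covariance is exactly \(M_B\). The remainder is controlled by a third-order Taylor bound on the Bernoulli log-likelihood together with \(\max_j\|u_j\|/\sqrt B\to0\), which I take to be part of the distinct-comparison regularity conditions. Because \(M_B\) is uniformly positive definite, the Lindeberg--Feller theorem, applied along subsequences on which \(M_B\) converges, gives \(M_B^{-1/2}\Delta_B\Rightarrow N(0,I_k)\). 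Both expansions are uniform over \(\|\eta\|\le R\), by contiguity (Le Cam's third lemma).

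\emph{Step 2 (one-step estimator).} The projected one-step estimator has the form
\[
\widehat\gamma_B=\widetilde\gamma_B+\widehat M_B^{-1}\widehat\Delta_B/\sqrt B,
\]
projected onto a compact set. Here \(\widetilde\gamma_B\) is a \(\sqrt B\)-consistent initializer built from the trusted sample (for \(\theta\), which needs \(\kappa>0\)) and a moment fit on the judge labels (for \(a\)). I would show that
\[
\sqrt B(\widehat\gamma_B-\gamma_{B,\eta})=M_B^{-1}\Delta_B-\eta+o_P(1)
\]
uniformly in \(\eta\). This uses smoothness of the score in \(\gamma\) and the fact that the observed information, evaluated at \(\widetilde\gamma_B\), is within \(o_P(1)\) of \(M_B\). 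The target component therefore has normalized error \(e_BM_B^{-1}\Delta_B-h+o_P(1)\), where \(e_B\) extracts the \(\theta\) block, and its asymptotic covariance is the target block of \(M_B^{-1}\).

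\emph{Step 3 (risk transfer; main obstacle).} The local expansion gives
\[
B\,D_F\!\left(\theta_0+\tfrac{h}{\sqrt B},\widehat\theta_B\right)=\tfrac12\,\xi_B^\top G_0\,\xi_B+o_P(1),
\qquad \xi_B=\sqrt B(\widehat\theta_B-\theta_{B,h}).
\]
To get the limit \(\tfrac12\operatorname{tr}(G_eM_B^{-1})\) I need convergence of expectations, not just of distributions, and this uniform integrability step is the hardest part. I would obtain it as follows:
\begin{itemize}
\item The projection onto a compact set makes \(D_F\) bounded by a constant, because \(F\) is smooth, so \(B\,D_F\) is at most \(O(B)\).
\item A Bernstein-type exponential inequality for bounded independent summands bounds the tail of \(\|\xi_B\|\). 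Combined with the stability of the one-step update off a small-probability event, this gives a uniform bound on \(\mathbb E_\eta\|\xi_B\|^{2+\epsilon}\).
\item On the complementary event, whose probability is exponentially small, the crude \(O(B)\) bound is harmless.
\end{itemize}
Since \(M_B\) need not converge, I would argue by contradiction along subsequences. Any subsequence has a further subsequence with \(M_B\to M\succ0\), and along it the difference in \eqref{eq-distinct-policy-risk} tends to zero. Hence the difference tends to zero along the full sequence.

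\emph{Step 4 (lower bound).} If \(M_B\to M\succ0\), Step~1 yields a limit Gaussian shift experiment with information \(M\). The proof of Theorem~\ref{thm-local-policy-risk} then applies verbatim, with \(M_\kappa(\xi)\) replaced by \(M\): it uses the local asymptotic minimax theorem for the loss \(\tfrac12 h^\top G_0h\) together with truncation and \(R\to\infty\). This gives the bound \(\tfrac12\operatorname{tr}(G_eM^{-1})\), which equals \(\lim_B\tfrac12\operatorname{tr}(G_eM_B^{-1})\).
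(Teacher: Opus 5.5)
Your proposal is correct, but it reaches the risk statement by a more indirect route than the paper. The paper never uses distributional limits for the upper part. It centers the score at the local truth, $\Delta_{B,\eta}=U_B(\gamma_{B,\eta})/\sqrt B$, and shows that bounded independent summands give this score uniformly bounded fourth moments. It then shows that the trusted and offset-nuisance preliminaries are root-$B$ in $L^4$. The nuisance strong convexity comes from $\lambda_{\min}(M_B)\ge\mu$ together with $t\le 1/4$, which give $B^{-1}\sum_{i\in S_B}W_{Bi}W_{Bi}^\top\succeq 4\mu I_r$. The exact Newton identity, with the finite-array information retained, then yields $\sqrt B(\widehat\gamma_B-\gamma_{B,\eta})=M_B(\eta)^{-1}\Delta_{B,\eta}+o_{L^2}(1)$. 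Because $\operatorname{Cov}(\Delta_{B,\eta})=M_B(\eta)$ holds exactly under correct specification, the expected quadratic form can be computed directly. The bound $\sup_{\|\eta\|\le R}\|M_B(\eta)-M_B\|_{\mathrm{op}}=O_R(B^{-1/2})$ finishes the argument. In this route uniform integrability is built into the $L^2$ expansion. No CLT, Le Cam third lemma or subsequence extraction is needed, and uniformity in $\eta$ comes for free.

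Your route is also valid and in fact controls all moments of $\xi_B$, but it is heavier and less quantitative. It combines LAN via Lindeberg--Feller, an $o_P(1)$ one-step expansion around $\gamma_0$, uniform integrability from Bernstein-type exponential tails, and a compactness argument over subsequences of $M_B$. To get the uniformity over bounded local parameters that the statement requires, you should extract subsequences along worst-case $\eta_B$ jointly with $M_B$. Your Step~3 leaves this implicit.

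For the lower bound, you use the H\'ajek--Le Cam local asymptotic minimax theorem with a truncated quadratic loss. The paper instead uses a van Trees inequality with a smooth product prior on $(-t,t)^k$, after projecting onto a $G_0$-metric ball and comparing curvature up to a factor $1-\varepsilon$. Both give $\tfrac12\operatorname{tr}(G_eM_\infty^{-1})$.

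One small correction: uniformity of the LAN remainder over $\|\eta\|\le R$ comes from the third-order Taylor bound with bounded derivatives, not from contiguity. Le Cam's third lemma only gives the law of $\Delta_B$ under $\gamma_{B,\eta}$.
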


The theorem evaluates the information of the realized subset and therefore retains the effect of rounding. Appendix~\ref{app-distinct-comparisons} gives the proof, while Appendix~\ref{app-computation} gives the plug-in analysis and computational details.

\section{Learning the Judge-Deviation Representation}
\label{sec-representation-learning}

In this section, we study the additional error introduced when the judge-deviation representation is learned from historical feedback. This error contributes a design-dependent term to policy risk and can reverse the ordering of acquisition rules.

\subsection{Finite-Sample Representation Cost}
\label{sec-representation-cost}

We use the same finite support \(z_1,\ldots,z_L\) and evaluate policy risk at the human target \(\theta_0=\theta^\star\). Let \(m\) denote the number of independent historical units used to learn the judge-deviation representation. Write \(\widehat W_m\in\mathbb R^{L\times r}\) for the learned representation, with row \(i\) given by \(\widehat W_{m,i}^\top\). Let \(a_{0m}\in\mathbb R^r\) be the nuisance component of the local center \(\gamma_{0m}=(\theta_0^\top,a_{0m}^\top)^\top\). The nuisance parameter \(a\) is re-estimated from the current data. Write \(\operatorname{logit}(u)=\log\{u/(1-u)\}\) for \(u\in(0,1)\), and define the remaining logit error \(e_m=(e_{m,1},\ldots,e_{m,L})^\top\) by
\begin{equation}
e_{m,i}
=
\operatorname{logit}q(z_i)
-
X_i^\top\theta_0
-
\widehat W_{m,i}^\top a_{0m},
\qquad i=1,\ldots,L.
\label{eq-representation-residual}
\end{equation}

Let \(\xi_{m,i}=n_i/n\) denote the judge allocation proportion for type \(i\). Define \(v_{m,i}=(X_i^\top,\widehat W_{m,i}^\top)^\top\) and \(t_{m,i}=\sigma'(v_{m,i}^\top\gamma_{0m})\). Let \(M_m\) denote the normalized joint information at \(\gamma_{0m}\), formed as in \eqref{eq-joint-information} from the learned representation, the actual judge allocation, and the expected trusted information. Set \(\Phi_m=\tfrac12\operatorname{tr}(G_eM_m^{-1})\), and let \(P_\theta=[I_d\;0_{d\times r}]\) select the target coordinates. For \(e=(e_1,\ldots,e_L)^\top\in\mathbb R^L\), define
\begin{equation}
    B_m e
    =
    P_\theta M_m^{-1}
    \sum_{i=1}^L
    \xi_{m,i}t_{m,i}v_{m,i}e_i,
    \qquad
    R_m
    =
    B_m^\top G_0B_m.
    \label{eq-representation-influence}
\end{equation}
The first-order effect of \(e_m\) on the target parameter is \(B_me_m\). Let \(\widehat\theta_{n,m}\) be the target component of the projected one-step estimator based on the learned representation and the current observations. Expectations below are over both the historical and current data.

\begin{theorem}[Policy risk with a learned representation]
\label{thm-representation-cost}
Under the regularity conditions in Appendix~\ref{app-representation-finite}, suppose \(n_c/n\to\kappa>0\) and the smallest eigenvalue of \(M_m\) is bounded away from zero. Let \(m,n\to\infty\) with \(n/m\to\lambda_P\in[0,\infty)\). Suppose \((\Phi_m,R_m)\to(\Phi_0,R_0)\) in probability for deterministic limits, and \(\sqrt m\,e_m\) converges in distribution to a random vector \(Z_P\) with mean \(b_P\) and covariance \(\Sigma_P\). Then
\begin{equation}
    \mathbb E\!\left[D_F(\theta_0,\widehat\theta_{n,m})\right]
    =
    \frac{\Phi_0}{n}
    +
    \frac{C_P}{m}
    +
    o(n^{-1}+m^{-1}),
    \qquad
    C_P
    =
    \frac12
    \left\{
        \operatorname{tr}(R_0\Sigma_P)
        +
        b_P^\top R_0b_P
    \right\}.
    \label{eq-representation-risk}
\end{equation}
\end{theorem}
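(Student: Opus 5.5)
The approach is to condition on the historical data, which fixes \(\widehat W_m\), \(a_{0m}\), and \(e_m\), and to expand the projected one-step estimator around a pseudo-true parameter. Under the learned representation the current judge labels are still Bernoulli with logit \(X_i^\top\theta_0+\widehat W_{m,i}^\top a_{0m}+e_{m,i}\), so the working model is misspecified by the logit error \(e_m\). Write the normalized score of the working joint model at \(\gamma_{0m}\) as \(S_n=S_n^{0}+\Delta_m\). Here \(S_n^{0}\) is the centered score under the correctly specified working model, with covariance \(M_m/n\) after normalization. The term \(\Delta_m=\sum_i\xi_{m,i}\{q(z_i)-\sigma(v_{m,i}^\top\gamma_{0m})\}v_{m,i}\) is the mean shift induced by \(e_m\). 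A first-order Taylor expansion of \(\sigma\) gives \(\Delta_m=\sum_i\xi_{m,i}t_{m,i}v_{m,i}e_{m,i}+O(\|e_m\|^2)\). The trusted human labels contribute no bias, because they are correctly specified at \(\theta_0\).

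The one-step estimator satisfies \(\widehat\gamma-\gamma_{0m}=M_m^{-1}S_n+\text{remainder}\). Its target component is therefore
\[
\widehat\theta_{n,m}-\theta_0=P_\theta M_m^{-1}S_n^{0}+B_me_m+r_{n,m},
\]
where \(r_{n,m}=o_P(n^{-1/2}+m^{-1/2})\). The remainder collects the initialization error handled as in the proof of Theorem~\ref{thm-local-policy-risk} (Appendix~\ref{app-minimax}), the quadratic Taylor term \(O_P(\|e_m\|^2)=O_P(m^{-1})\), and the effect of linearizing at \(\gamma_{0m}\) rather than at the true pseudo-parameter. Next I would apply the local expansion \(D_F(\theta_0,\theta_0+h)=\tfrac12h^\top G_0h+o(\|h\|^2)\) with \(h=\widehat\theta_{n,m}-\theta_0\). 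This gives
\[
D_F\approx \tfrac12 (S_n^{0})^\top M_m^{-1}P_\theta^\top G_0P_\theta M_m^{-1}S_n^{0}+(S_n^0)^\top M_m^{-1}P_\theta^\top G_0B_me_m+\tfrac12 e_m^\top R_me_m.
\]

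The expectation is taken in two stages. Conditional on the history, \(S_n^0\) has mean zero. The cross term therefore vanishes up to \(o(n^{-1}+m^{-1})\); this needs independence of historical and current data together with a bias bound on the Bernoulli score. Conditionally, the first term has expectation \(\Phi_m/n+o(n^{-1})\) by the same trace computation as in Theorem~\ref{thm-local-policy-risk}, since \(\tfrac12\operatorname{tr}(G_eM_m^{-1})=\Phi_m\). The third term equals \(\tfrac{1}{2m}(\sqrt m e_m)^\top R_m(\sqrt m e_m)\). Because \(R_m\to R_0\) and \(\sqrt m e_m\Rightarrow Z_P\), its expectation is \(\tfrac{1}{2m}\mathbb E[Z_P^\top R_0Z_P]=\tfrac{1}{2m}\{\operatorname{tr}(R_0\Sigma_P)+b_P^\top R_0b_P\}\), which is \(C_P/m\). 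Likewise \(\Phi_m\to\Phi_0\) gives \(\Phi_0/n\). The assumption \(n/m\to\lambda_P\) keeps both scales comparable, so the remainders combine into \(o(n^{-1}+m^{-1})\).

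The main obstacle is upgrading these convergences in probability and in distribution to convergence of expectations. Two things are needed. First, uniform integrability of \(n\|h\|^2\) and \(m\|e_m\|^2\), which I would take from the moment conditions in Appendix~\ref{app-representation-finite}; these include bounded features, \(\lambda_{\min}(M_m)\) bounded below, and the projection step that keeps \(\widehat\theta\) in a compact set so that \(D_F\) stays bounded. Second, control of the Taylor remainder of \(D_F\) on the event where \(h\) is not small. My first attempt would use the projection to bound \(D_F\) on that event by a constant times the indicator of \(\|h\|>\epsilon\). I would then bound the probability of that event by exponential concentration for the Bernoulli score plus Markov's inequality on \(m\|e_m\|^2\), which makes its contribution \(o(n^{-1}+m^{-1})\).
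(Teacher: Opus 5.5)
Your proposal is essentially the paper's argument: condition on $\mathcal F_m$, obtain $\sqrt n(\widehat\theta_{n,m}-\theta_0)=P_\theta M_m^{-1}(\text{conditionally centered score})+\sqrt n\,B_me_m+o_{L^2}(1)$, drop the cross term by conditional centering, and pass to the limits by uniform integrability. Linearizing at $\gamma_{0m}$ and Taylor-expanding the mean shift, rather than expanding around the population root $\gamma_m^\dagger$ of Proposition~\ref{prop:app-representation-root}, is only a bookkeeping difference; note that the conditional covariance of your $S_n^0$ is $M_m/n+O(\|e_m\|/n)$ rather than exactly $M_m/n$, which is harmless.

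One step needs tightening. Plain Markov on $m\|e_m\|^2$ only gives $\mathbb P(\|e_m\|>\epsilon)=O(m^{-1})$, and that contributes $O(m^{-1})$ to the risk, not $o(m^{-1})$. Instead use the assumed bound $\sup_m\mathbb E\|\sqrt m\,e_m\|^4<\infty$ to get $O(m^{-2})$, as the paper does, or the uniform integrability of $m\|e_m\|^2$, which yields $o(m^{-1})$.
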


The first term is the oracle sampling risk and the second is the representation-learning cost, which depends on the direction of the representation error through \(R_0\) and includes both covariance and bias contributions. Proofs and additional representation-learning results are given in Appendix~\ref{app-representation-learning}.

\subsection{Design Ranking Reversals}
\label{sec-design-reversals}

The representation-cost term in \eqref{eq-representation-risk} can reverse the ordering induced by the oracle sampling risk. We compare NAOD with the matched Target-info design under the same learned representation, label budgets, and final estimator. Let \(\mathcal L_{\mathrm N}(n,m)\) and \(\mathcal L_{\mathrm T}(n,m)\) denote their expected policy risks. Write \(\Phi_{\mathrm N},\Phi_{\mathrm T}\) for their limiting oracle risk constants and \(C_{P,\mathrm N},C_{P,\mathrm T}\) for their representation-cost coefficients. Applying Theorem~\ref{thm-representation-cost} to the two designs gives
\begin{equation}
    n\{\mathcal L_{\mathrm N}(n,m)-\mathcal L_{\mathrm T}(n,m)\}
    \longrightarrow
    \Phi_{\mathrm N}-\Phi_{\mathrm T}
    +
    \lambda_P(C_{P,\mathrm N}-C_{P,\mathrm T}).
    \label{eq-design-reversal}
\end{equation}
If \(\Phi_{\mathrm N}<\Phi_{\mathrm T}\), the leading-order ordering reverses whenever
\(\lambda_P(C_{P,\mathrm N}-C_{P,\mathrm T})>\Phi_{\mathrm T}-\Phi_{\mathrm N}\).
Thus, NAOD can have lower oracle sampling risk while being more sensitive to representation error.

When \(n/m\to0\), the representation-cost term vanishes and the oracle ordering is recovered. When \(n/m\to\lambda_P>0\), it can contribute at leading order. Thus, NAOD optimizes the fitted design criterion rather than the full two-stage risk.

\section{Experiments}
\label{sec-experiments}

\subsection{Synthetic Experiments}
\label{sec-controlled-experiments}

\paragraph{Effective information predicts policy risk.}
We first consider a scalar three-type Bernoulli construction with a constant nuisance component and equal trusted and judge sample sizes, \(n_c=n\). NAOD, Target-info, and Random use the same correctly specified joint model and projected one-step estimator and differ only in the acquisition design. Each setting is evaluated over 6,000 independent replications. Across \(n\in\{240,960,3840,15360\}\), the empirical scaled risks in Figure~\ref{fig-controlled}(a) closely track their corresponding theoretical information constants, supporting the effective-information characterization of leading policy risk.

\begin{figure}[htbp]
    \centering
    \includegraphics[width=\linewidth]{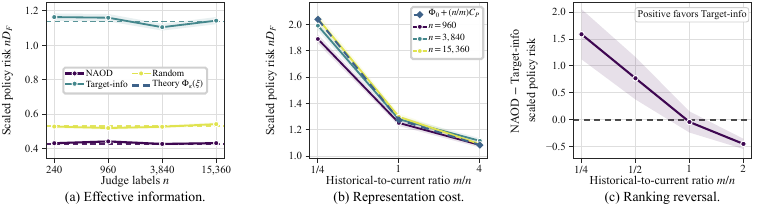}
    \vspace{-7mm}
    \caption{\textbf{Controlled tests of the theoretical predictions.} Dashed references in panels (a) and (b) denote the corresponding theoretical predictions. Panel (c) reports the NAOD-minus-Target-info scaled-risk difference.}
    \label{fig-controlled}
\end{figure}

 \vspace{-1mm}
 
\paragraph{Learning the representation adds the predicted risk term.}
We next use a scalar two-type Bernoulli construction with \(n_c=n\), in which the first-order cost of learning the judge-deviation representation is explicit. For this construction, \(\Phi_0=50/49\) and \(C_P=25/98\). Figure~\ref{fig-controlled}(b) tests the risk expansion in \eqref{eq-representation-risk} across \(n\in\{960,3840,15360\}\) and \(m/n\in\{1/4,1,4\}\). The empirical scaled risks closely follow the theoretical prediction \(\Phi_0+(n/m)C_P\), with the largest finite-sample deviation occurring at the smallest representation-learning sample size.

\paragraph{Representation error can reverse design rankings.}
We then learn the representation in the three-type experiment and compare NAOD with Target-info under otherwise identical conditions, with \(n=n_c=960\). Figure~\ref{fig-controlled}(c) shows the paired NAOD-minus-Target-info scaled-risk difference as \(m/n\) varies. With limited historical data, the difference is positive and Target-info has lower risk. As the historical sample grows, the gap shrinks, becomes indistinguishable from zero around \(m/n=1\), and turns negative by \(m/n=2\), where NAOD has lower risk. This confirms the design-ranking reversal mechanism in Section~\ref{sec-design-reversals}. Full constructions, finite-sample diagnostics, and additional analyses are reported in Appendices~\ref{app-controlled-mechanisms} and~\ref{app-controlled-additional}.

\subsection{Chatbot Arena Evaluation}
\label{sec-arena-experiments}

\paragraph{Experimental setup.}
We evaluate finite-pool acquisition on a frozen Chatbot Arena LLM-judge archive \citep{pmlr-v235-chiang24b} with \(49{,}635\) usable comparisons in \(42{,}875\) connected text clusters and 17 judges. Current budgets are \(H\in\{32,64,128,256,512\}\) human labels and \(B\in\{256,512,1024\}\) judge records, yielding 15 budget combinations. Each combination is evaluated over 15 cluster-level splits. Within each split, representation learning, acquisition, and evaluation use disjoint clusters. At most one comparison is selected from each candidate cluster.

All acquisition methods differ only in the acquisition rule. Frozen Qwen3 embeddings \citep{qwen3embedding} are used to construct judge-specific two-dimensional target and nuisance representations, with the full procedure given in Appendix~\ref{app-arena-protocol}. Representation learning uses \(8{,}575\) shared upstream human--judge comparisons, and initialization uses 128 historical comparisons. These shared resources are fixed across acquisition rules and are not counted in the current \((H,B)\) budgets. We compare NAOD against five acquisition baselines, Random, Entropy, D-opt, PA D-opt \citep{pmlr-v267-shen25c}, and Target-info. Initial and Human-only serve as reference baselines from the common initialization. Judge feedback uses the archived A/B/tie probabilities \citep{sun-etal-2025-skillaggregation} through the soft target \(p_A+\tfrac12p_T\), where \(p_A\) and \(p_T\) are the archived probabilities of an A win and a tie, respectively. For these bounded soft targets, Appendix~\ref{app-foundations-soft} gives the corresponding sandwich-risk characterization. Under mean correctness, Theorem~\ref{thm:app-soft-least-favorable} shows that the population curvature criterion underlying NAOD equals the worst-case leading sandwich policy-risk coefficient over bounded soft-feedback laws with the same conditional means.
 
\paragraph{Evaluation.}
The primary endpoint is proxy policy regret relative to a human-preference reference fitted only from upstream data. We additionally report held-out human cross-entropy and tie-aware human choice accuracy. Test losses are first averaged within connected clusters and then equally across clusters. Within each split, judges and budget combinations are macro-averaged. Paired uncertainty is reported using two-sided 95\% \(t\)-intervals across the 15 split-level differences.

\paragraph{Results.}
NAOD achieves the lowest mean proxy policy regret and human cross-entropy in Table~\ref{tab:arena-main}, together with the highest mean human choice accuracy. Relative to the matched Target-info design, NAOD reduces mean proxy policy regret by \(29.11\%\), with 14 of 15 split averages favoring NAOD. Relative to D-opt, the reduction is \(18.87\%\), with 12 of 15 split averages favoring NAOD.

\begin{table}[htbp]
\centering
\small
\setlength{\tabcolsep}{5pt}
\renewcommand{\arraystretch}{1.3}
\caption{\textbf{Overall Chatbot Arena results.} Means over 17 judges, 15 budget regimes, and 15 random cluster-level splits. Regret is reported in \(10^{-3}\) units, CE in nats per pair, and accuracy in percent. The last two columns report NAOD's relative regret reduction and paired split wins against each row. Bold denotes the best mean.}
\label{tab:arena-main}
\vspace{1.5mm}
\begin{tabular*}{\linewidth}{@{\extracolsep{\fill}}lccccc}
\hline
Method & Proxy regret $\downarrow$ & Human CE $\downarrow$ & Acc. $\uparrow$ & Drop (\%) & Wins \\
\hline
Initial     & 8.86  & 0.6752 & 57.82 & 77.53 & 13/15 \\
Human-only  & 18.25 & 0.6829 & 56.84 & 89.09 & 15/15 \\
Random      & 4.88  & 0.6727 & 58.22 & 59.18 & 15/15 \\
Entropy     & 13.42 & 0.6784 & 57.10 & 85.17 & 15/15 \\
D-opt       & 2.45  & 0.6703 & 58.19 & 18.87 & 12/15 \\
PA D-opt    & 2.46  & 0.6703 & 58.19 & 18.93 & 12/15 \\
Target-info & 2.81  & 0.6705 & 58.21 & 29.11 & 14/15 \\
\hline
\textbf{NAOD} & \textbf{1.99} & \textbf{0.6699} & \textbf{58.27} & --- & --- \\
\hline
\end{tabular*}
\end{table}

\paragraph{Budget dependence.}
NAOD reduces mean proxy policy regret relative to Target-info in all 15 budget settings. At a fixed human budget \(H\), the relative advantage consistently narrows as the judge budget \(B\) increases. Moreover, settings with the same ratio \(H/B\) can exhibit substantially different gains, showing that absolute budget size matters in addition to the human-to-judge budget ratio. The complete \(5\times3\) budget grid is reported in Figure~\ref{fig-arena-budgets} in Appendix~\ref{app-arena-results}.

\paragraph{Target--nuisance coupling.}
The acquisition gain varies across judges and tends to be larger when Target-info induces stronger target--nuisance coupling. This pattern is consistent with the theoretical prediction that nuisance-aware acquisition is most valuable when target and nuisance information are strongly coupled. The corresponding judge-level analysis is reported in Figure~\ref{fig-arena-coupling} in Appendix~\ref{app-arena-results}.

\section{Conclusion}
\label{sec-conclusion}

We studied active preference learning with LLM-judge feedback that may systematically deviate from target human preferences. NAOD prioritizes policy-relevant target information after nuisance adjustment. The design criterion has a sharp local minimax characterization, while controlled experiments validate the predicted risk behavior and Arena results show that gains tend to be larger under stronger
target--nuisance coupling. Representation error can nevertheless change the preferred acquisition design, motivating future work on jointly learning judge-deviation representations and selecting comparisons.

\subsection*{AI use statement}
Generative AI tools were used to assist with language polishing and to check the algebraic consistency of author-derived mathematical derivations. All AI-assisted suggestions and checks were reviewed and independently verified by the authors. The authors take full responsibility for the final content of the paper.

\subsection*{Ethics statement}
This work uses archived Chatbot Arena preference data and collects no new human-subject data. The available preference feedback and LLM-judge outputs may reflect population- and model-specific biases and may not be universally representative. Preference models and acquisition strategies based on such feedback should therefore be evaluated with respect to the intended target population and deployment setting.

\subsection*{Reproducibility statement}
We provide the code required to reproduce the reported experimental results in the supplementary material. The main text and appendices provide the statistical model, acquisition and estimation procedures, optimization details, and experimental protocols required for reproduction, together with the assumptions and proofs underlying the theoretical results.

\bibliography{iclr2027_conference}
\bibliographystyle{iclr2027_conference}

\clearpage

\appendix
\startcontents[appendix]
\section*{Appendix}
\section{Technical Foundations}
\label{app-foundations}

This appendix collects the common statistical and policy-loss facts used throughout the analysis. We specify the conditional experiment and policy loss, establish the local likelihood expansion, characterize efficient target information and policy-null directions, and finally separate likelihood curvature from score covariance for soft feedback.

\subsection{Notation, Regularity, and Policy Loss}
\label{app-foundations-notation}

\paragraph{Conditioning and notation.}
Let $\mathcal F_P$ denote the historical information available before the current outcomes are observed, including the frozen nuisance representation, comparison features, allocation, and numerical settings. Unless explicitly stated otherwise, probabilities and expectations in the fixed-representation theory are conditional on $\mathcal F_P$. Write $\gamma=(\theta^\top,a^\top)^\top\in\mathbb R^k$, with $k=d+r$, and let $P_\theta=[I_d\;\;0]$ select the target coordinates. A judge observation of comparison type $i$ has augmented feature $v_i=(X_i^\top,W_i^\top)^\top$, while a trusted observation with feature $X_{cj}$ has padded feature $v_{cj}=(X_{cj}^\top,0^\top)^\top$. There are $n_i$ judge observations of type $i$, $n=\sum_{i=1}^L n_i$ judge observations in total, and $n_c$ trusted observations. For bounded $\eta=(h^\top,\upsilon^\top)^\top$, the local parameter is $\gamma_{n,\eta}=\gamma_0+\eta/\sqrt n$, with target component $\theta_{n,h}=\theta_0+h/\sqrt n$.

\paragraph{Regularity.}
For the fixed-support theory, $L,d,r$ are fixed and all feature vectors are uniformly bounded. Conditional on their features, current judge and trusted outcomes are mutually independent Bernoulli variables under the joint model in \eqref{eq-joint-judge-model}. A compact convex product set $\mathcal K=\mathcal K_\theta\times\mathcal K_a$ contains $\gamma_0$ in its interior, with a bounded open neighborhood available for derivative expansions. The proportions satisfy $n_i/n\to\xi_i$ and $n_c/n\to\kappa<\infty$. Uniformly over bounded root-$n$ neighborhoods, the normalized expected information converges to the positive-definite matrix $M_\kappa(\xi)$ in \eqref{eq-joint-information}. Trusted covariates may be bounded deterministic arrays with the stated information limit or independent bounded random covariates independent of the judge block. The policy value $F$ is twice continuously differentiable near the target region, with bounded uniformly continuous Hessian. For the Gibbs policy considered here, boundedness follows directly from the covariance representation below, while uniform continuity follows on the compact target neighborhood. Bounded features and the compact parameter region also imply a common interior probability bound $q(1-q)\ge t_{\min}>0$. Additional conditions required by the concrete preliminary estimator are stated in Appendix~\ref{app-minimax}.

\paragraph{Gibbs policy and directed policy loss.}
For the Gibbs policy in \eqref{eq-induced-policy} and value function in \eqref{eq-policy-value}, substituting the log probability ratio into the regularized policy objective gives
\[
V_\theta(\pi)=F(\theta)-\tau\,\mathbb E_{\mu_e}\mathrm{KL}\!\left(\pi(\cdot\mid s)\,\|\,\pi_\theta(\cdot\mid s)\right).
\]
Thus $\pi_\theta$ is the unique optimizer. Differentiating the log partition function yields
\[
\nabla F(\theta)=\mathbb E_{s\sim\mu_e}\mathbb E_{y\sim\pi_\theta(\cdot\mid s)}[\phi(s,y)],\qquad \nabla^2F(\theta)=\frac{1}{\tau}\mathbb E_{s\sim\mu_e}\operatorname{Cov}_{y\sim\pi_\theta(\cdot\mid s)}[\phi(s,y)].
\]
In particular, $F$ is convex. If the policy optimized at parameter $y$ is evaluated under reward parameter $x$, linearity of the reward gives $V_x(\pi_y)=F(y)+\nabla F(y)^\top(x-y)$, and therefore
\[
F(x)-V_x(\pi_y)=D_F(x,y)=\tau\,\mathbb E_{\mu_e}\mathrm{KL}\!\left(\pi_y(\cdot\mid s)\,\|\,\pi_x(\cdot\mid s)\right).
\]
Hence the first Bregman argument is the reward under which performance is evaluated, whereas the second determines the deployed policy. At the local center,
\begin{equation}
D_F(\theta_0,\theta_0+h)=\frac12 h^\top G_0h+o(\|h\|^2),\qquad G_0=\nabla^2F(\theta_0).
\label{eq-app-local-policy-loss}
\end{equation}
This gives the policy weighting used by NAOD.

\subsection{Scores, Information, and Local Asymptotic Normality}
\label{app-foundations-lan}

Let $\operatorname{sp}(u)=\log(1+e^u)$. For judge labels $Y_{i\ell}$ and trusted labels $H_j$, the total negative log likelihood is
\[
Q_n(\gamma)=\sum_{i=1}^L\sum_{\ell=1}^{n_i}\left\{\operatorname{sp}(v_i^\top\gamma)-Y_{i\ell}v_i^\top\gamma\right\}+\sum_{j=1}^{n_c}\left\{\operatorname{sp}(X_{cj}^\top\theta)-H_jX_{cj}^\top\theta\right\}.
\]
Define the total score and observed information by $U_n(\gamma)=-\nabla Q_n(\gamma)$ and $I_n(\gamma)=\nabla^2Q_n(\gamma)$. A judge observation with probability $q_i(\gamma)=\sigma(v_i^\top\gamma)$ contributes $(Y_{i\ell}-q_i(\gamma))v_i$ to the score and $q_i(\gamma)\{1-q_i(\gamma)\}v_iv_i^\top$ to the information. Trusted observations have the same form with padded features. Hence $Q_n$ is globally convex.

At the correctly specified Bernoulli center,
\[
M_n:=\frac1n\mathbb E I_n(\gamma_0)=\frac1n\operatorname{Cov}\{U_n(\gamma_0)\}\longrightarrow M_\kappa(\xi).
\]
The equality between score covariance and curvature is specific to the correctly specified Bernoulli experiment. Appendix~\ref{app-foundations-soft} treats soft feedback separately. Let $\Delta_n=U_n(\gamma_0)/\sqrt n$. Bounded independent score summands in fixed dimension give $\sup_n\mathbb E\|\Delta_n\|^4<\infty$, while the normalized Hessian differs from its expectation by $O_{L^4}(n^{-1/2})$ when current trusted covariates are random. For fixed covariate arrays, it is deterministic conditional on the features.

\begin{proposition}[Conditional LAN]
\label{prop:app-lan}
Under the regularity conditions of Appendix~\ref{app-foundations-notation},
\[
\Delta_n\Rightarrow N\!\left(0,M_\kappa(\xi)\right),
\]
and, writing $\ell_n=-Q_n$, for every fixed $R<\infty$,
\[
\ell_n\!\left(\gamma_0+\frac{\eta}{\sqrt n}\right)-\ell_n(\gamma_0)=\eta^\top\Delta_n-\frac12\eta^\top M_\kappa(\xi)\eta+o_P(1)
\]
uniformly over $\|\eta\|\le R$. The same derivative and moment bounds hold uniformly along bounded local truth sequences.
\end{proposition}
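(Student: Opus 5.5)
The proof treats the two claims of Proposition~\ref{prop:app-lan} separately: the central limit statement for \(\Delta_n\) and the quadratic expansion of the log likelihood.

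\textbf{Central limit step.} We will apply the Lindeberg--Feller CLT for triangular arrays to the sum of independent centered summands
\[
\Delta_n=n^{-1/2}\Big\{\sum_{i}\sum_{\ell\le n_i}(Y_{i\ell}-q_i(\gamma_0))v_i+\sum_{j\le n_c}(H_j-\sigma(X_{cj}^\top\theta_0))v_{cj}\Big\}.
\]
The summands are uniformly bounded because the features are bounded and the labels are Bernoulli. Their total covariance, divided by \(n\), equals \(M_n\), and \(M_n\to M_\kappa(\xi)\) by the regularity conditions. Since every summand is \(O(n^{-1/2})\) after normalization, the Lindeberg condition holds trivially.

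When the trusted covariates are random, we first condition on them. The conditional CLT holds with conditional covariance \(n^{-1}\mathbb E[I_n(\gamma_0)\mid X_c]\). This matrix converges in probability to \(M_\kappa(\xi)\) by the law of large numbers, using the \(O_{L^4}(n^{-1/2})\) Hessian fluctuation bound. Weak convergence then passes to the unconditional law by dominated convergence of characteristic functions. The fourth-moment bound \(\sup_n\mathbb E\|\Delta_n\|^4<\infty\) follows from Rosenthal's inequality for bounded independent summands.

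\textbf{Likelihood expansion.} For each fixed \(\eta\), Taylor's theorem with integral remainder gives
\[
\ell_n(\gamma_0+\eta/\sqrt n)-\ell_n(\gamma_0)=\eta^\top\Delta_n-\tfrac12\eta^\top\Big\{\tfrac1n\!\int_0^1\!2(1-s)I_n(\gamma_0+s\eta/\sqrt n)\,ds\Big\}\eta.
\]
The observed information does not depend on the labels. We will control it in two pieces:
\begin{itemize}
\item Each atom \(\sigma'(v^\top\gamma)vv^\top\) is Lipschitz in \(\gamma\) with a constant bounded by \(\sup\|v\|^3/4\). Hence \(\|n^{-1}I_n(\gamma)-n^{-1}I_n(\gamma_0)\|\le CR/\sqrt n\) uniformly on \(\|\eta\|\le R\).
\item The difference \(n^{-1}I_n(\gamma_0)-M_\kappa(\xi)\) tends to zero. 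It equals the convergence \(M_n\to M_\kappa(\xi)\) plus the random-covariate fluctuation, which is \(o_P(1)\).
\end{itemize}
Together these give the quadratic term \(-\tfrac12\eta^\top M_\kappa(\xi)\eta\). The remainder is bounded by \(R^2\) times a sup-norm that tends to zero, and this bound does not depend on \(\eta\), so the expansion holds uniformly over the ball. No empirical-process argument is needed.

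\textbf{Uniformity along local truths.} For truths \(\gamma_{n,\eta_n}\) with \(\eta_n\) bounded, the same arguments go through under the corresponding laws. The centered summands remain bounded. Their covariance is \(n^{-1}\mathbb E I_n(\gamma_{n,\eta_n})\), which by the Lipschitz bound is within \(O(n^{-1/2})\) of \(M_n\). The derivative bounds are deterministic, so they are unaffected by the change of truth.

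\textbf{Expected difficulty.} The only delicate point is the random trusted-covariate case. There we must make sure the conditional-to-unconditional passage and the Hessian fluctuation bound hold uniformly in \(\eta\). The plan handles this by bounding everything by label-free quantities, so only a single \(o_P(1)\) term, independent of \(\eta\), appears.
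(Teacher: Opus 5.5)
Your proposal is correct and follows essentially the same route as the paper: a triangular-array (Lindeberg--Feller/Cram\'er--Wold) CLT for the bounded independent score summands, and a Taylor expansion of the log likelihood whose remainder is controlled by bounded third derivatives (your Lipschitz bound on the label-free logistic Hessian) together with convergence of the normalized information to $M_\kappa(\xi)$. Your conditioning on the random trusted covariates is valid but not needed: unconditionally those summands are already independent, centered, and bounded, and that is how the paper treats them.
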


\begin{proof}
Every scalar projection of $\Delta_n$ is a triangular array of independent centered bounded terms whose largest summand is $O(n^{-1/2})$ and whose total variance converges to the corresponding quadratic form of $M_\kappa(\xi)$. The triangular-array central limit theorem and the Cram\'er--Wold device give the Gaussian limit. A third-order Taylor expansion of the log likelihood around $\gamma_0$ gives the displayed quadratic likelihood ratio. Bounded third derivatives make the remainder $O(n^{-1/2})$ uniformly on each fixed local ball, while normalized information converges uniformly to $M_\kappa(\xi)$.
\end{proof}

\subsection{Efficient Target Information and Policy-Null Directions}
\label{app-foundations-efficient}

Suppress $(\kappa,\xi)$ and partition the correctly specified Bernoulli information as
\[
M=\begin{pmatrix}A&C\\C^\top&D\end{pmatrix}\succ0.
\]
Let $(S_\theta,S_a)$ denote the target and nuisance components of the limiting LAN score, whose covariance is $M$. Since $D\succ0$, define $S_{\rm eff}=S_\theta-CD^{-1}S_a$. Then
\[
\operatorname{Cov}(S_{\rm eff},S_a)=0,\qquad \operatorname{Cov}(S_{\rm eff})=A-CD^{-1}C^\top=I_{\rm eff}.
\]
Thus the Schur complement removes exactly the target-score component linearly reproducible by nuisance-score variation. Equivalently, for every target direction $h$,
\[
h^\top I_{\rm eff}h=\min_{b\in\mathbb R^r}\left\{h^\top Ah-2h^\top Cb+b^\top Db\right\},
\]
with minimizer $b=D^{-1}C^\top h$. Block elimination gives
\begin{equation}
P_\theta M^{-1}P_\theta^\top=I_{\rm eff}^{-1}.
\label{eq-app-block-inverse}
\end{equation}
Finally, with $K=A^{-1/2}CD^{-1/2}$,
\begin{equation}
A^{-1/2}I_{\rm eff}A^{-1/2}=I-KK^\top.
\label{eq-app-whitened-efficient}
\end{equation}
The latter is the whitening identity used by the target--nuisance coupling analysis.

At the saturation extreme, if the judge target-score directions lie entirely in the nuisance-score span, judge feedback contributes no additional effective information for the human target. In particular, if $W_i=X_i$ with unrestricted nuisance coefficients, the judge likelihood identifies only $\theta+a$, and $I_{\rm eff}=\kappa H_c$.

\paragraph{Policy-null directions.}
Theorem~\ref{thm-local-policy-risk} states the lower bound directly for $G_0\succ0$. For the Gibbs policy, the reduction required when $G_0\succeq0$ is exact.

\begin{proposition}[Exact policy-null quotient]
\label{prop:app-policy-null}
Let $\mathcal N=\ker\nabla^2F(\theta_0)$. For the finite-response Gibbs policy with full-support reference policy, $\mathcal N$ is the same at every finite parameter. For every $h\in\mathcal N$ and finite $t$,
\[
\pi_{\theta+th}(\cdot\mid s)=\pi_\theta(\cdot\mid s)\qquad\text{for $\mu_e$-almost every }s.
\]
In orthogonal coordinates $\theta=(\beta,\zeta)\in\mathcal N^\perp\oplus\mathcal N$, the value decomposes as $F(\beta,\zeta)=f(\beta)+c^\top\zeta$, so
\[
D_F\bigl((\beta,\zeta),(\beta',\zeta')\bigr)=D_f(\beta,\beta').
\]
Unless the policy loss is identically zero, $\nabla^2f(\beta_0)\succ0$.
\end{proposition}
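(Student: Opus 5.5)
The plan is to characterize $\mathcal N$ directly from the covariance formula for $\nabla^2F$ in Appendix~\ref{app-foundations-notation}, and then read off each claim from that characterization. For $h\in\mathbb R^d$,
\[
h^\top\nabla^2F(\theta)h=\frac1\tau\,\mathbb E_{s\sim\mu_e}\operatorname{Var}_{y\sim\pi_\theta(\cdot\mid s)}\bigl[\phi(s,y)^\top h\bigr].
\]
Since $\nabla^2F(\theta)\succeq0$, $h\in\ker\nabla^2F(\theta)$ iff this quadratic form vanishes. That holds iff the inner variance is zero for $\mu_e$-a.e.\ $s$.

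The first key step is that $\pi_\theta(\cdot\mid s)$ has full support on the finite set $\mathcal A(s)$ for every finite $\theta$. This is because $\pi_{\mathrm{ref}}$ has full support and the exponential weights are strictly positive. So zero variance is equivalent to $y\mapsto\phi(s,y)^\top h$ being constant on $\mathcal A(s)$, say equal to $c_h(s)$. This condition does not involve $\theta$, so $\mathcal N=\ker\nabla^2F(\theta)$ is the same at every finite parameter. To avoid an $h$-dependent null set, I would fix a basis $n_1,\dots,n_p$ of $\mathcal N$ and take the union of the finitely many exceptional null sets. By linearity, $\phi(s,\cdot)^\top h$ is then constant in $y$ for every $h\in\mathcal N$ simultaneously, off a single $\mu_e$-null set. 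I expect this measure-theoretic bookkeeping, together with the full-support argument, to be the only delicate point.

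Next, for $h\in\mathcal N$ and finite $t$, I have $r_{\theta+th}(s,y)=r_\theta(s,y)+t\,c_h(s)$ off the null set. The factor $\exp(t\,c_h(s)/\tau)$ is common to the numerator and the normalizer of \eqref{eq-induced-policy}, so it cancels and $\pi_{\theta+th}=\pi_\theta$. The same shift pulls out of the log-partition in \eqref{eq-policy-value} and gives $F(\theta+th)=F(\theta)+t\,\mathbb E_{\mu_e}[c_h(s)]$. Here $c_h$ is bounded because the features are bounded, so the expectation is finite. The map $h\mapsto\mathbb E_{\mu_e}[c_h(s)]$ is linear on $\mathcal N$, so I write it as $c^\top\zeta$ in the orthogonal coordinates $\theta=(\beta,\zeta)$. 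This gives $F(\beta,\zeta)=f(\beta)+c^\top\zeta$ with $f(\beta):=F(\beta,0)$.

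Substituting into the Bregman divergence \eqref{eq-policy-loss}, the linear term $c^\top\zeta$ cancels against the $\zeta$-block of the gradient. The $\zeta$-component of $\nabla F$ is exactly $c$. This leaves $D_F((\beta,\zeta),(\beta',\zeta'))=D_f(\beta,\beta')$. Finally, $\nabla^2f(\beta_0)$ is the compression of $\nabla^2F(\theta_0)$ to $\mathcal N^\perp$. A positive semidefinite matrix restricted to the orthogonal complement of its kernel is positive definite, so $\nabla^2f(\beta_0)\succ0$ whenever $\mathcal N^\perp\neq\{0\}$. In the remaining case $\mathcal N=\mathbb R^d$, the policy never changes and $D_F\equiv0$, which is the excluded case.
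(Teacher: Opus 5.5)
Your proposal is correct and follows essentially the same route as the paper's proof. Both arguments use the covariance representation of $\nabla^2F$, and use full support to force $h^\top\phi(s,\cdot)$ to be response-constant (hence $\theta$-independent). Both then show that the common factor cancels in the Gibbs normalization, so $F$ is affine along $\mathcal N$, and that this affine part drops out of the Bregman divergence. You additionally make explicit the null-set bookkeeping via a basis of $\mathcal N$ and the positive definiteness of the compression of $\nabla^2F(\theta_0)$ to $\mathcal N^\perp$, both of which the paper leaves implicit.
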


\begin{proof}
For $h\in\mathcal N$, the covariance representation of the policy Hessian gives
\[
0=h^\top\nabla^2F(\theta_0)h=\frac1\tau\mathbb E_{\mu_e}\operatorname{Var}_{\pi_{\theta_0}(\cdot\mid s)}\{h^\top\phi(s,y)\}.
\]
Because the Gibbs policy has full support, $h^\top\phi(s,y)$ must be constant over responses for almost every prompt. Adding $th$ therefore multiplies every unnormalized Gibbs weight at that prompt by the same factor, leaving the normalized policy unchanged and making $F$ affine along $h$. The same response-constant characterization holds at every finite parameter, so the null space is parameter-independent. The affine component cancels from the Bregman divergence, giving the quotient representation.
\end{proof}

Policy-null target coordinates may therefore be treated as additional nuisance coordinates. Applying the positive-curvature argument on $\mathcal N^\perp$ yields the same policy-weighted trace criterion in the original coordinates.

\subsection{Soft Feedback and Sandwich Policy Risk}
\label{app-foundations-soft}

For binary evaluation comparisons with equal reference weights and $\tau=1$, let $F_e(\theta)=\mathbb E_e\operatorname{sp}(X^\top\theta)$ up to an affine term. Under the correctly specified human model,
\[
\operatorname{CE}(\theta)-\operatorname{CE}(\theta^\star)=D_{F_e}(\theta,\theta^\star),\qquad
\operatorname{Regret}(\pi_\theta;\theta^\star)=D_{F_e}(\theta^\star,\theta).
\]
The two losses therefore have opposite Bregman orientations, although both have the same local quadratic expansion
\[
\frac12(\theta-\theta^\star)^\top\nabla^2F_e(\theta^\star)(\theta-\theta^\star)
+o(\|\theta-\theta^\star\|^2).
\]
This distinction is used in Appendix~\ref{app-arena-analysis}, where proxy policy regret and held-out human cross-entropy are separate endpoints.

Now allow conditionally independent responses $Y_{nj}\in[0,1]$. The logistic loss remains $\operatorname{sp}(v_{nj}^\top\gamma)-Y_{nj}v_{nj}^\top\gamma$, with
\[
U_n(\gamma)=\sum_j v_{nj}\{Y_{nj}-q_{nj}(\gamma)\},\qquad
I_n(\gamma)=\sum_j q_{nj}(\gamma)\{1-q_{nj}(\gamma)\}v_{nj}v_{nj}^\top,
\]
where $q_{nj}(\gamma)=\sigma(v_{nj}^\top\gamma)$. Hence soft and Bernoulli responses share the same logistic curvature, while their score covariances need not agree.

\begin{proposition}[Soft-label sandwich policy risk]
\label{prop:app-soft-sandwich}
Suppose interior population roots $\gamma_n^\dagger=(\theta_n^{\dagger\top},a_n^{\dagger\top})^\top\to\gamma^\dagger$ satisfy $\mathbb E U_n(\gamma_n^\dagger)=0$, and
\[
M_n:=\frac1n\mathbb E I_n(\gamma_n^\dagger)\to M\succ0,\qquad
\Omega_n:=\frac1n\operatorname{Cov}\{U_n(\gamma_n^\dagger)\}\to\Omega\succeq0.
\]
Assume a regular estimator satisfies
\[
\sqrt n(\widehat\gamma_n-\gamma_n^\dagger)
=
M^{-1}\frac{U_n(\gamma_n^\dagger)}{\sqrt n}
+o_{L^2}(1).
\]
Partition $M=\left(\begin{smallmatrix}A&C\\C^\top&D\end{smallmatrix}\right)$, set $I_{\rm eff}=A-CD^{-1}C^\top$, $L=[I_d\;\;-CD^{-1}]$, and $\Omega_{\rm eff}=L\Omega L^\top$. Then, with $G^\dagger=\nabla^2F(\theta^\dagger)$,
\[
n\,\mathbb E D_F(\theta_n^\dagger,\widehat\theta_n)
\longrightarrow
\Psi_{\rm soft}
:=
\frac12\operatorname{tr}\!\left(
G^\dagger I_{\rm eff}^{-1}
\Omega_{\rm eff}
I_{\rm eff}^{-1}
\right).
\]
If every row is additionally mean-correct, $\mathbb E[Y_{nj}\mid v_{nj}]=q_{nj}(\gamma_n^\dagger)$, then
\[
\Omega\preceq M,\qquad
\Omega_{\rm eff}\preceq I_{\rm eff},\qquad
\Psi_{\rm soft}
\le
\frac12\operatorname{tr}\!\left(
G^\dagger I_{\rm eff}^{-1}
\right).
\]
For correctly specified Bernoulli feedback, all three inequalities become equalities.
\end{proposition}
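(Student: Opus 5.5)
The plan is to prove the three parts of Proposition~\ref{prop:app-soft-sandwich} in turn: the sandwich limit from the assumed linear expansion, the mean-correct bound from a conditional-variance inequality, and the Bernoulli equality case.

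\emph{Step 1: the sandwich limit.} Write $\widehat\theta_n-\theta_n^\dagger=P_\theta(\widehat\gamma_n-\gamma_n^\dagger)$. By block elimination, $P_\theta M^{-1}=I_{\rm eff}^{-1}L$, which is the same identity as \eqref{eq-app-block-inverse} but applied to the full row block. Hence
\[
\sqrt n(\widehat\theta_n-\theta_n^\dagger)=I_{\rm eff}^{-1}L\,U_n(\gamma_n^\dagger)/\sqrt n+o_{L^2}(1).
\]
The score term has mean zero, because $\mathbb E U_n(\gamma_n^\dagger)=0$. Its covariance is $I_{\rm eff}^{-1}L\Omega_nL^\top I_{\rm eff}^{-1}$, which converges to $I_{\rm eff}^{-1}\Omega_{\rm eff}I_{\rm eff}^{-1}$. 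Next expand the policy loss as in \eqref{eq-app-local-policy-loss}, centred at the moving point $\theta_n^\dagger$:
\[
D_F(\theta_n^\dagger,\theta)=\tfrac12(\theta-\theta_n^\dagger)^\top\nabla^2F(\tilde\theta)(\theta-\theta_n^\dagger),
\]
with $\tilde\theta$ on the segment between the two points. Uniform continuity of the Hessian gives $\nabla^2F(\tilde\theta)\to G^\dagger$ on events where $\widehat\theta_n$ is close to $\theta_n^\dagger$.

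\emph{The main obstacle} is upgrading these in-probability expansions to convergence of $n\,\mathbb E D_F$. The bounded Hessian gives $n D_F\le C\,n\|\widehat\theta_n-\theta_n^\dagger\|^2$. The $o_{L^2}$ remainder, together with the bounded second moment of the score, makes $n\|\widehat\theta_n-\theta_n^\dagger\|^2$ uniformly integrable. Uniform integrability plus the in-probability Hessian convergence then lets expectations pass to the limit. The result is $n\,\mathbb E D_F\to\tfrac12\operatorname{tr}(G^\dagger I_{\rm eff}^{-1}\Omega_{\rm eff}I_{\rm eff}^{-1})=\Psi_{\rm soft}$.

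\emph{Step 2: the mean-correct bound.} Under mean correctness, the score rows are independent with mean zero. Each has covariance $\operatorname{Var}(Y_{nj}\mid v_{nj})\,v_{nj}v_{nj}^\top$. For $Y\in[0,1]$ with mean $q$, we have $\operatorname{Var}(Y)=\mathbb E Y^2-q^2\le\mathbb E Y-q^2=q(1-q)$. Summing over rows gives $\Omega_n\preceq M_n$, and passing to the limit gives $\Omega\preceq M$. Congruence by $L$ yields $\Omega_{\rm eff}\preceq LML^\top$. Direct multiplication shows $LML^\top=A-CD^{-1}C^\top=I_{\rm eff}$. Therefore
\[
I_{\rm eff}^{-1}\Omega_{\rm eff}I_{\rm eff}^{-1}\preceq I_{\rm eff}^{-1}.
\]
Finally, $G^\dagger\succeq0$ because $F$ is convex, so trace monotonicity gives $\Psi_{\rm soft}\le\tfrac12\operatorname{tr}(G^\dagger I_{\rm eff}^{-1})$.

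\emph{Step 3: the Bernoulli case.} For Bernoulli responses $Y^2=Y$, so the variance inequality in Step 2 is an equality. This gives $\Omega_n=M_n$, as already noted in Appendix~\ref{app-foundations-lan}. Equality then propagates through the congruence by $L$ and through the trace, so all three inequalities become equalities.
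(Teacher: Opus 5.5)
Your proposal is correct and follows essentially the same route as the paper's proof. Both use the block-elimination identity $P_\theta M^{-1}=I_{\rm eff}^{-1}L$ for the sandwich covariance, the bound $\operatorname{Var}(Y\mid v)\le q(1-q)$ for $Y\in[0,1]$ summed over rows and pushed through congruence by $L$ with $LML^\top=I_{\rm eff}$, and $Y^2=Y$ for the Bernoulli equality case. One small tightening: a bounded second moment of the score does not by itself make $n\|\widehat\theta_n-\theta_n^\dagger\|^2$ uniformly integrable. However, since $\mathbb E U_n(\gamma_n^\dagger)=0$, $U_n(\gamma_n^\dagger)/\sqrt n$ is a normalized sum of independent, bounded, centered terms with uniformly bounded fourth moment under the standing bounded-feature assumptions, and that does give the uniform integrability your limit step needs.
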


\begin{proof}
The assumed asymptotic linear expansion gives target covariance
$P_\theta M^{-1}\Omega M^{-1}P_\theta^\top$.
Since $LM=[I_{\rm eff}\;\;0]$, block elimination gives
$P_\theta M^{-1}=I_{\rm eff}^{-1}L$, which yields the stated sandwich covariance and, after the local Bregman expansion, the policy-risk limit.

Under mean correctness and $0\le Y_{nj}\le1$,
\[
\operatorname{Var}(Y_{nj}\mid v_{nj})
=
q_{nj}(1-q_{nj})
-
\mathbb E[Y_{nj}(1-Y_{nj})\mid v_{nj}]
\le
q_{nj}(1-q_{nj}).
\]
Conditional independence and summation of the score-covariance contributions give $\Omega\preceq M$. Congruence by $L$ gives $\Omega_{\rm eff}\preceq I_{\rm eff}$ and hence the risk bound. For Bernoulli responses, $Y_{nj}(1-Y_{nj})=0$ almost surely, so equality is recovered.
\end{proof}

\begin{theorem}[Least-favorable bounded soft feedback]
\label{thm:app-soft-least-favorable}
Fix an admissible design $\xi\in\Xi$ and suppose the conditions of Proposition~\ref{prop:app-soft-sandwich} hold under mean correctness with limiting root $\gamma^\dagger=\gamma_0$. Let $\Psi_{\rm soft}(\xi;P)$ denote the limiting sandwich policy-risk coefficient under a conditionally independent response law $P$ with the same conditional means and responses in $[0,1]$. The supremum over all such response laws for which the asymptotic expansion in Proposition~\ref{prop:app-soft-sandwich} holds is
\[
\sup_P \Psi_{\rm soft}(\xi;P)=\Phi_\kappa(\xi).
\]
The supremum is attained by conditionally Bernoulli feedback with the same conditional means. Consequently,
\[
\arg\min_{\xi\in\Xi}\sup_P\Psi_{\rm soft}(\xi;P)
=
\arg\min_{\xi\in\Xi}\Phi_\kappa(\xi).
\]
Thus the NAOD criterion minimizes the worst-case leading sandwich policy-risk coefficient over the bounded mean-correct soft-feedback class.
\end{theorem}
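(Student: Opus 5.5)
The argument has two parts: an upper bound on the supremum from Proposition~\ref{prop:app-soft-sandwich}, and attainment of that bound by conditionally Bernoulli feedback. The argmin equality then follows immediately.

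\emph{Upper bound.} Fix $\xi\in\Xi$ and let $P$ be any conditionally independent response law with responses in $[0,1]$, the prescribed conditional means, and the asymptotic expansion of Proposition~\ref{prop:app-soft-sandwich} at the limiting root $\gamma^\dagger=\gamma_0$. The curvature matrix $M_n=n^{-1}\mathbb E I_n(\gamma_n^\dagger)$ depends on the responses only through $q_{nj}(\gamma_n^\dagger)$. This is because the logistic Hessian is free of $Y$, as noted before Proposition~\ref{prop:app-soft-sandwich}. Hence its limit $M$ is the same for every admissible $P$. Since the root is $\gamma_0$ and the design is $\xi$, this common limit is $M_\kappa(\xi)$ from \eqref{eq-joint-information}. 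Consequently $I_{\rm eff}=I_{\mathrm{eff},\kappa}(\xi)$ and $G^\dagger=G_0$. Mean correctness lets us invoke the second part of Proposition~\ref{prop:app-soft-sandwich}, which gives $\Psi_{\rm soft}(\xi;P)\le\frac12\operatorname{tr}(G_0 I_{\mathrm{eff},\kappa}(\xi)^{-1})=\Phi_\kappa(\xi)$, using \eqref{eq-policy-design-criterion}. Taking the supremum over $P$ yields $\sup_P\Psi_{\rm soft}\le\Phi_\kappa(\xi)$.

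\emph{Attainment.} Take $P_{\rm B}$ to be conditionally Bernoulli with means $q_{nj}(\gamma_0)$; this law is admissible because Bernoulli labels lie in $[0,1]$. Under $P_{\rm B}$ the score covariance equals the curvature, so $\Omega=M$ and $\Omega_{\rm eff}=I_{\rm eff}$. The equality clause of Proposition~\ref{prop:app-soft-sandwich} then gives $\Psi_{\rm soft}(\xi;P_{\rm B})=\Phi_\kappa(\xi)$. The asymptotic linear expansion required for $P_{\rm B}$ to belong to the class is supplied by the projected one-step estimator of Theorem~\ref{thm-local-policy-risk}, under the regularity and LAN conditions of Proposition~\ref{prop:app-lan}. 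Therefore the supremum is attained and equals $\Phi_\kappa(\xi)$.

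\emph{Argmin.} The two objectives $\xi\mapsto\sup_P\Psi_{\rm soft}(\xi;P)$ and $\xi\mapsto\Phi_\kappa(\xi)$ coincide pointwise on $\Xi$, so their argmin sets are identical.

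The main point requiring care is the \emph{upper bound}: we must confirm that the curvature limit $M$, and hence $I_{\rm eff}$, does not change with $P$. This holds only because the conditional means are fixed, so that $\mathbb E U_n(\gamma_0)=0$ and the root is pinned at $\gamma_0$ for every admissible law.
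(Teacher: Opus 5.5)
Your proposal is correct and follows the paper's proof essentially step for step. The curvature, and hence $I_{\mathrm{eff},\kappa}(\xi)$, is common to all mean-correct laws; Proposition~\ref{prop:app-soft-sandwich} then gives $\Psi_{\rm soft}(\xi;P)\le\Phi_\kappa(\xi)$, conditionally Bernoulli feedback gives $\Omega=M$ and hence equality, and pointwise equality of the two objectives yields equal argmin sets. Your remark that the Bernoulli law actually belongs to the class, via the projected one-step expansion of Appendix~\ref{app-minimax}, is a detail the paper leaves implicit; note that this route also needs $\kappa>0$ and the attainment conditions stated there.
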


\begin{proof}
Fix $\xi$. Under mean correctness, the logistic curvature depends on the conditional means but not on the conditional variances of the responses. Hence all response laws considered in the theorem have the same limiting curvature $M_\kappa(\xi)$ and the same effective information $I_{\rm eff,\kappa}(\xi)$.

For any such response law $P$, Proposition~\ref{prop:app-soft-sandwich} gives
\[
\Omega_P\preceq M_\kappa(\xi).
\]
With $L=[I_d\;\;-C_\xi D_\xi^{-1}]$, congruence preserves the Loewner order and therefore
\[
\Omega_{{\rm eff},P}
=
L\Omega_P L^\top
\preceq
LM_\kappa(\xi)L^\top
=
I_{\rm eff,\kappa}(\xi).
\]
Since $G_0\succeq0$ and $I_{\rm eff,\kappa}(\xi)\succ0$,
\[
\Psi_{\rm soft}(\xi;P)
\le
\frac12
\operatorname{tr}\!\left(
G_0 I_{\rm eff,\kappa}(\xi)^{-1}
\right)
=
\Phi_\kappa(\xi).
\]

Now take conditionally Bernoulli responses with the same conditional means. Then
\[
\operatorname{Var}(Y_{nj}\mid v_{nj})
=
q_{nj}(\gamma_0)\{1-q_{nj}(\gamma_0)\},
\]
so the score covariance equals the logistic curvature. Hence
$\Omega_P=M_\kappa(\xi)$ and
$\Omega_{{\rm eff},P}=I_{\rm eff,\kappa}(\xi)$, which gives
$\Psi_{\rm soft}(\xi;P)=\Phi_\kappa(\xi)$.
The upper bound is therefore attainable and the first claim follows.

Because the equality
$\sup_P\Psi_{\rm soft}(\xi;P)=\Phi_\kappa(\xi)$
holds pointwise for every admissible design, minimizing both sides over $\Xi$ gives the stated equality of minimizer sets.
\end{proof}

The Arena experiment in Section~\ref{sec-arena-experiments} uses the archived soft target \(Y_J=p_A+\frac12p_T\in[0,1]\). NAOD therefore uses the same logistic curvature geometry as in the Bernoulli theory without identifying curvature with score covariance. Under mean correctness, Proposition~\ref{prop:app-soft-sandwich} gives the corresponding sandwich risk, while Theorem~\ref{thm:app-soft-least-favorable} shows that the population curvature criterion underlying NAOD is the worst-case leading sandwich policy-risk coefficient over bounded mean-correct soft feedback. Under mean misspecification, the sandwich result describes fluctuations around the joint population root rather than eliminating displacement of that root from the human target.

\section{Theory for Nuisance-Adjusted Optimal Design}
\label{app-naod-theory}

This appendix develops the theory used in Section~\ref{sec-problem} and Sections~\ref{sec-joint-model}--\ref{sec-policy-design}. We first connect acquisition-dependent residual exposure to target displacement and policy loss, then prove the conditional local minimax characterization and attainment of the NAOD criterion. We finally give the explicit separation from Target-info and formalize the target--nuisance coupling diagnostic used in the empirical analysis.

\subsection{Acquisition-Dependent Residual Exposure}
\label{app-exposure}

Section~\ref{sec-problem} defines the probability-scale judge discrepancy $\delta(z)=q(z)-p^\star(z)$ and the acquired residual score $b_\xi=\mathbb E_\xi[\delta(Z)X(Z)]$. The change-of-measure identity in \eqref{eq-residual-exposure} does not require a parametric model for the judge discrepancy. We now give its consequences for a nuisance-ignorant reward fit.

Suppose an interior population root $\theta_\xi$ satisfies $\mathbb E_\xi[(q-p_{\theta_\xi})X]=0$, and write $\Delta=\theta_\xi-\theta^\star$. Define the integrated statistical and policy curvatures
\[
H_\xi=\int_0^1\mathbb E_\xi\!\left[\sigma'\!\left(X^\top(\theta^\star+t\Delta)\right)XX^\top\right]dt,\qquad G_\xi=2\int_0^1(1-t)\nabla^2F(\theta_\xi-t\Delta)\,dt.
\]
Subtracting the population score equations and integrating the logistic derivative along the segment from $\theta^\star$ to $\theta_\xi$ gives $H_\xi\Delta=b_\xi$. Hence, whenever $H_\xi\succ0$,
\[
\theta_\xi-\theta^\star=H_\xi^{-1}b_\xi,\qquad D_F(\theta^\star,\theta_\xi)=\frac12\,b_\xi^\top H_\xi^{-1}G_\xi H_\xi^{-1}b_\xi.
\]
Thus policy impact depends on how the residual aligns with reward-score directions under the acquired law rather than on residual magnitude alone.

The joint target--nuisance model gives a complementary local interpretation. Holding the human parameter at $\theta_0$, define $b_\xi(a)=\sum_{i=1}^L\xi_i\{q_{\theta_0,a}(z_i)-p_{\theta_0}(z_i)\}X_i$. Differentiation gives $\partial b_\xi(a)/\partial a^\top=C_\xi(a)$, as in \eqref{eq-exposure-coupling}. Since $b_\xi(0)=0$, the fundamental theorem of calculus yields
\[
b_\xi(a)=\int_0^1 C_\xi(ta)a\,dt,\qquad b_\xi\!\left(a_0+\frac{\upsilon}{\sqrt n}\right)-b_\xi(a_0)=\frac{C_\xi(a_0)\upsilon}{\sqrt n}+O(n^{-1}).
\]
Hence the same cross-information that produces the Schur-complement adjustment in \eqref{eq-effective-information} also controls the local sensitivity of the exposed target score.

\subsection{Conditional Local Minimax Risk and Attainment}
\label{app-minimax}

We condition throughout on the historical information $\mathcal F_P$, so the representation $W$, allocation $\xi$, and all design quantities are fixed before the current outcomes are observed. Write $M=M_\kappa(\xi)$ as in \eqref{eq-joint-information} and $G_e=\operatorname{diag}(G_0,0)$.

\paragraph{Preliminary estimation and projected one-step estimator.}
For attainment, assume $\kappa>0$, a common positive lower bound on normalized trusted likelihood curvature over $\mathcal K_\theta$, a common positive lower bound on the weighted nuisance Gram matrix $\sum_i(n_i/n)W_iW_i^\top$, and a fixed positive distance of the local truths from the boundary of $\mathcal K=\mathcal K_\theta\times\mathcal K_a$. Let $c_s>0$ be smaller than one half of a valid asymptotic lower bound on $\lambda_{\min}(M)$.

Let $\widetilde\theta_n$ minimize the trusted negative log likelihood on $\mathcal K_\theta$. Holding its target contribution fixed as an offset, let
\[
\widetilde a_n\in\arg\min_{a\in\mathcal K_a}\sum_{i=1}^L\sum_{\ell=1}^{n_i}\left\{\operatorname{sp}(X_i^\top\widetilde\theta_n+W_i^\top a)-Y_{i\ell}(X_i^\top\widetilde\theta_n+W_i^\top a)\right\}.
\]
Set $\widetilde\gamma_n=(\widetilde\theta_n^\top,\widetilde a_n^\top)^\top$. If the declared nuisance-curvature check fails, a fixed interior anchor may be used. Under the stated conditions, this event is asymptotically negligible. The guarded one-step update is
\[
\gamma_n^\circ=
\begin{cases}
\widetilde\gamma_n+I_n(\widetilde\gamma_n)^{-1}U_n(\widetilde\gamma_n), & \lambda_{\min}\{I_n(\widetilde\gamma_n)/n\}\ge c_s,\\
\widetilde\gamma_n, & \text{otherwise},
\end{cases}
\qquad
\widehat\gamma_n=\operatorname{Proj}_{\mathcal K}(\gamma_n^\circ),
\]
with target output $\widehat\theta_n=P_\theta\widehat\gamma_n$. The preliminary and one-step update may use the same current observations. No independence between them is assumed.

Strong convexity of the trusted likelihood and the bounded-score moment bound from Appendix~\ref{app-foundations-lan} give $\|\widetilde\theta_n-\theta_{n,h}\|_{L^4}=O(n^{-1/2})$. At the true nuisance coefficient, the normalized offset score is a centered judge-score average plus a term Lipschitz in $\widetilde\theta_n-\theta_{n,h}$. The nuisance Gram lower bound and the common interior probability bound therefore give $\|\widetilde a_n-a_{n,\upsilon}\|_{L^4}=O(n^{-1/2})$, and hence $\|\widetilde\gamma_n-\gamma_{n,\eta}\|_{L^4}=O(n^{-1/2})$ uniformly on every fixed local parameter ball.

To obtain the one-step expansion, put $e_n=\widetilde\gamma_n-\gamma_{n,\eta}$, $\widehat H_n=I_n(\widetilde\gamma_n)/n$, and $\overline H_n=\int_0^1 I_n(\gamma_{n,\eta}+te_n)/n\,dt$. Since the derivative of the score is minus the information, on the guard event the Newton update satisfies the exact identity
\[
\sqrt n(\gamma_n^\circ-\gamma_{n,\eta})=\widehat H_n^{-1}\frac{U_n(\gamma_{n,\eta})}{\sqrt n}+\widehat H_n^{-1}(\widehat H_n-\overline H_n)\sqrt n\,e_n.
\]
Normalized Hessians are uniformly Lipschitz on the compact neighborhood, so $\|\widehat H_n-\overline H_n\|_{\mathrm{op}}\le C\|e_n\|$. The second term is therefore $O_{L^2}(n^{-1/2})$. Hessian concentration from Appendix~\ref{app-foundations-lan}, local information convergence, and the fixed spectral guard permit replacing $\widehat H_n^{-1}$ by $M^{-1}$. The guard complement has vanishing squared contribution by the fourth-moment bound, and the fixed boundary margin makes the final projection asymptotically inactive in mean square. Consequently,
\[
\sqrt n(\widehat\gamma_n-\gamma_{n,\eta})=M^{-1}\frac{U_n(\gamma_{n,\eta})}{\sqrt n}+o_{L^2}(1)
\]
uniformly on bounded local parameter sets.

The same argument will be used in Appendix~\ref{app-representation-finite} around a nearby population root rather than the correctly specified center. More generally, if an interior population root $\gamma_n^\dagger$ has normalized expected Hessian and score covariance converging to the same $M\succ0$, and a preliminary satisfies $r_n=\|\widetilde\gamma_n-\gamma_n^\dagger\|_{L^4}=o(n^{-1/4})$, then the nonlinear Newton remainder is $O_{L^2}(\sqrt n\,r_n^2)=o(1)$ and
\[
\sqrt n(\widehat\gamma_n-\gamma_n^\dagger)=M^{-1}\frac{U_n(\gamma_n^\dagger)}{\sqrt n}+o_{L^2}(1).
\]

\paragraph{Conditional local minimax lower bound.}
We now prove the lower-bound part of Theorem~\ref{thm-local-policy-risk}. Assume first $G_0\succ0$. Fix $\varepsilon\in(0,1)$ and a bounded local parameter set. Continuity of the policy Hessian gives a neighborhood of $\theta_0$ on which $\nabla^2F\succeq(1-\varepsilon)G_0$. Project the rescaled output of an arbitrary reward estimator onto a fixed $G_0$-metric ball containing the target local parameters. Metric projection cannot increase its quadratic distance to any point in this ball. For decisions outside the local neighborhood, convexity together with positive local curvature gives a fixed positive policy loss, whereas the projected quadratic loss remains bounded. Hence, for all sufficiently large $n$,
\[
nD_F\!\left(\theta_0+\frac{h}{\sqrt n},T_n\right)\ge\frac{1-\varepsilon}{2}\|\overline T_n-h\|_{G_0}^2,
\]
where $\overline T_n$ denotes the corresponding localized rescaled decision.

Place a smooth product prior on $\eta$ supported on $(-t,t)^k$, with prior Fisher information $\pi^2t^{-2}I_k$. Let $E_n=\overline T_n-P_\theta\eta$ and let $S_n$ be the joint score of the conditional likelihood and this local prior. Integration by parts in the local parameter gives $\mathbb E[E_nS_n^\top]=P_\theta$. Writing $J_n=\mathbb E[S_nS_n^\top]$, positive semidefiniteness of
\[
\mathbb E\!\left[(E_n-P_\theta J_n^{-1}S_n)(E_n-P_\theta J_n^{-1}S_n)^\top\right]
\]
therefore gives
\[
\mathbb E[E_nE_n^\top]\succeq P_\theta J_n^{-1}P_\theta^\top,\qquad J_n=\mathbb E_\eta\mathcal I_n(\eta)+\pi^2t^{-2}I_k,
\]
where $\mathcal I_n(\eta)$ is Fisher information for the local coordinate $\eta$. The regularity conditions in Appendix~\ref{app-foundations-notation} imply that $\mathcal I_n(\eta)\to M$ uniformly on every fixed prior support. Taking the trace against $G_0$, using worst-case risk to dominate prior-averaged risk, and then letting $n\to\infty$, $t\to\infty$, and $\varepsilon\downarrow0$ yields \eqref{eq-local-risk-lower-bound}, because \eqref{eq-app-block-inverse} gives $P_\theta M^{-1}P_\theta^\top=I_{\mathrm{eff},\kappa}^{-1}$. When $G_0\succeq0$ is singular, Proposition~\ref{prop:app-policy-null} removes the exact policy-null coordinates and the same argument applies on the policy-relevant quotient.

\paragraph{Attainment.}
Under correct specification, $U_n(\gamma_{n,\eta})/\sqrt n$ is centered, its covariance converges to $M$, and its fourth moments are uniformly bounded. Combining the one-step expansion with the local Bregman expansion in \eqref{eq-app-local-policy-loss} and the resulting uniform integrability gives
\[
n\,\mathbb E_\eta D_F(\theta_{n,h},\widehat\theta_n)\longrightarrow\frac12\operatorname{tr}(G_eM^{-1})=\Phi_\kappa(\xi)
\]
uniformly on every fixed local parameter ball. Thus the projected one-step estimator attains the lower-bound constant in Theorem~\ref{thm-local-policy-risk}.

\subsection{Separation and Target--Nuisance Coupling}
\label{app-coupling}

\begin{proposition}[Separation from Target-info]
\label{prop-design-separation}
Let $\lambda>1$, and consider the scalar two-type model with $(X_1,X_2)=(2\lambda,2)$, $(W_1,W_2)=(2,-2\lambda)$, $\theta_0=a_0=0$, and $H_c=(\lambda^2+1)/2$. For unrestricted allocation with $\kappa>0$ and $G_0>0$, Target-info selects $\xi_{\mathrm T}=(1,0)$, whereas NAOD selects $\xi^\star=(\lambda,1)/(\lambda+1)$. Their leading risk ratio is
\[
\frac{\Phi_\kappa(\xi_{\mathrm T})}{\Phi_\kappa(\xi^\star)}=1+\frac{2(\lambda^2+1)}{\kappa(\lambda+1)^2}>1.
\]
\end{proposition}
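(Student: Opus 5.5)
The plan is direct computation in the scalar case $d=r=1$, where each criterion $\Phi_\kappa(\xi)=\tfrac12 G_0\,I_{\mathrm{eff},\kappa}(\xi)^{-1}$ and its Target-info counterpart $\tfrac12 G_0\,A_\kappa(\xi)^{-1}$ are minimized by maximizing a scalar information. At $\theta_0=a_0=0$ every judge type has $t_i=\sigma'(0)=1/4$. Substituting into \eqref{eq-joint-information} gives
\[
A_\kappa(\xi)=\kappa H_c+\lambda^2\xi_1+\xi_2,\qquad C_\xi=\lambda(\xi_1-\xi_2),\qquad D_\xi=\xi_1+\lambda^2\xi_2 .
\]

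For Target-info, $A_\kappa$ is affine in $\xi$ on the simplex. Since $\lambda^2>1$, it is strictly maximized at the vertex $\xi_{\mathrm T}=(1,0)$. There $C=\lambda$ and $D=1$, so the judge contribution to the Schur complement cancels exactly: $I_{\mathrm{eff},\kappa}(\xi_{\mathrm T})=\kappa H_c+\lambda^2-\lambda^2=\kappa H_c$. This is the saturation phenomenon noted after \eqref{eq-app-whitened-efficient}. I will also check that $M_\kappa(\xi_{\mathrm T})=\kappa\operatorname{diag}(H_c,0)+v_1v_1^\top/4\succ0$, which holds because $\kappa H_c>0$ and $W_1\neq0$.

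For NAOD, the step that makes the optimization tractable is rewriting the judge part of the Schur complement as a determinant ratio. The full vectors $v_1=(2\lambda,2)^\top$ and $v_2=(2,-2\lambda)^\top$ are orthogonal with $\|v_i\|^2=4(\lambda^2+1)$. Hence the judge information $\tfrac14\sum_i\xi_iv_iv_i^\top$ has determinant $(\lambda^2+1)^2\xi_1\xi_2$, and
\[
I_{\mathrm{eff},\kappa}(\xi)=\kappa H_c+\frac{(\lambda^2+1)^2\xi_1\xi_2}{\xi_1+\lambda^2\xi_2}
=\kappa H_c+\frac{(\lambda^2+1)^2}{1/\xi_2+\lambda^2/\xi_1}.
\]
(One can also verify this by expanding $A-C^2/D$ directly.) Maximizing it is therefore equivalent to minimizing $1/\xi_2+\lambda^2/\xi_1$ subject to $\xi_1+\xi_2=1$. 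By Cauchy--Schwarz, $(1/\xi_2+\lambda^2/\xi_1)(\xi_1+\xi_2)\ge(1+\lambda)^2$, with equality iff $\xi_1:\xi_2=\lambda:1$. This gives the unique minimizer $\xi^\star=(\lambda,1)/(\lambda+1)$ and the value $I_{\mathrm{eff},\kappa}(\xi^\star)=\kappa H_c+(\lambda^2+1)^2/(\lambda+1)^2$. At the vertices the judge part of $I_{\mathrm{eff}}$ vanishes, so they are suboptimal; positive definiteness there follows from $\kappa>0$ and $D>0$.

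The risk ratio is then
\[
\frac{\Phi_\kappa(\xi_{\mathrm T})}{\Phi_\kappa(\xi^\star)}=\frac{I_{\mathrm{eff},\kappa}(\xi^\star)}{\kappa H_c}=1+\frac{(\lambda^2+1)^2}{\kappa H_c(\lambda+1)^2}.
\]
Substituting $H_c=(\lambda^2+1)/2$ yields the stated expression, which is strictly greater than $1$. The only step needing care is the clean closed form for the Schur complement; the orthogonality and determinant identity resolve it, after which everything is routine algebra.
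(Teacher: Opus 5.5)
Your proof is correct and follows the paper's route. You use the same zero-center blocks, pick Target-info at the vertex because $A_\kappa$ is increasing in $\xi_1$, and write the effective information as $\kappa H_c+\det J(\xi)/D_\xi$ with $\det J(\xi)=(\lambda^2+1)^2\xi_1\xi_2$, which gives the same ratio. The only differences are minor: you locate $\xi^\star=(\lambda,1)/(\lambda+1)$ by Cauchy--Schwarz on $1/\xi_2+\lambda^2/\xi_1$, which also gives uniqueness directly, where the paper uses a derivative sign analysis; and you justify the determinant through the orthogonality of $v_1,v_2$, where the paper simply states it.
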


\begin{proof}
At the zero center, every judge variance is $1/4$. If $x\in[0,1]$ denotes the fraction of judge labels assigned to the first type, the two judge information atoms are
\[
J_1=\begin{pmatrix}\lambda^2&\lambda\\\lambda&1\end{pmatrix},\qquad J_2=\begin{pmatrix}1&-\lambda\\-\lambda&\lambda^2\end{pmatrix}.
\]
The raw target information is $\kappa H_c+x\lambda^2+(1-x)$, which is strictly increasing in $x$ because $\lambda>1$. Hence Target-info selects $x=1$.

For the judge block $J(x)=xJ_1+(1-x)J_2$, the nuisance entry is $D(x)=x+\lambda^2(1-x)$ and $\det J(x)=x(1-x)(\lambda^2+1)^2$. Because trusted information enters only the target block, the effective target information is
\[
I_{\mathrm{eff}}(x)=\kappa H_c+\frac{x(1-x)(\lambda^2+1)^2}{x+\lambda^2(1-x)}.
\]
The derivative of the second term has roots $\lambda/(\lambda+1)$ and $\lambda/(\lambda-1)$. Only the first lies in $[0,1]$, and the derivative changes from positive to negative there. Thus NAOD selects $x^\star=\lambda/(\lambda+1)$. At this allocation, $I_{\mathrm{eff}}(x^\star)=\kappa H_c+(\lambda^2+1)^2/(\lambda+1)^2$, whereas $I_{\mathrm{eff}}(1)=\kappa H_c$. Since the scalar leading policy risk is $G_0/\{2I_{\mathrm{eff}}(x)\}$ by \eqref{eq-policy-design-criterion}, substituting $H_c=(\lambda^2+1)/2$ gives the stated ratio.
\end{proof}

This construction isolates the acquisition criterion because both rules use the same correctly specified joint model and final estimator, while Target-info favors the type with the largest raw target block even when that type cannot separate target from nuisance variation.

\paragraph{Target--nuisance coupling.}
For any positive-definite design, define
\[
K_\xi=A_\kappa(\xi)^{-1/2}C_\xi D_\xi^{-1/2},\qquad \rho^2(\xi)=\|K_\xi\|_{\mathrm{op}}^2.
\]
Positive definiteness of the Schur complement implies $0\le\rho^2(\xi)<1$. By the whitening identity in \eqref{eq-app-whitened-efficient}, if $\rho_j^2$ are the eigenvalues of $K_\xi K_\xi^\top$, then $1-\rho_j^2$ are the corresponding fractions of whitened target information remaining after nuisance adjustment. Thus $\rho^2(\xi)$ records the strongest target--nuisance coupling. Values near one indicate a target-score direction that is nearly reproducible by nuisance variation and for which raw Target-info can substantially overstate usable target information.

This coupling measure is geometric rather than policy-weighted, so its consequence for policy risk also depends on whether the coupled target directions receive substantial weight under $G_0$. The Arena analysis in Figure~\ref{fig-arena-coupling} and Table~\ref{tab:arena-judge} evaluates $\rho^2$ for the Target-info-selected design and uses it as a diagnostic of target--nuisance confounding rather than as a finite-sample prediction of the realized NAOD gain.

\section{Finite-Pool Selection and Computation}
\label{app-finite-pool}

This appendix gives the optimization, statistical, and computational details for the finite-pool procedure in Section~\ref{sec-finite-pool}. We first justify the convex relaxation and its executable optimality certificate, then prove Theorem~\ref{thm-distinct-comparisons} directly for the realized set of distinct comparisons. We finally separate plug-in error from numerical optimization error and record the computational cost of the implementation.

\subsection{Finite-Pool Optimization and Certificates}
\label{app-finite-pool-optimization}

For the fitted finite-pool criterion in \eqref{eq-finite-pool-criterion}, write $\widehat{\mathcal I}_x=\widehat{\mathcal I}(x)$. The seed $S_0$ is chosen only when needed to keep fitted information positive definite. Because all information atoms are positive semidefinite, positive definiteness of the trusted-plus-seed information then implies $\widehat{\mathcal I}_x\succ0$ for every $x\in\mathcal X_B$.

The map $M\mapsto\tfrac12\operatorname{tr}(GM^{-1})$ is convex for $G\succeq0$ on the positive-definite cone. Indeed, for a symmetric perturbation $H$,
\[
\frac{d^2}{dt^2}\left.\frac12\operatorname{tr}\!\left\{G(M+tH)^{-1}\right\}\right|_{t=0}
=
\operatorname{tr}\!\left(GM^{-1}HM^{-1}HM^{-1}\right)
\ge0.
\]
Since $\widehat{\mathcal I}(x)$ is affine in $x$, $f_B$ is convex on $\mathcal X_B$. Differentiation gives the gradient used by the linear oracle.

For a feasible relaxed point $x$, define the full Frank--Wolfe gap
\[
g_{\mathrm{FW}}(x)
=
\max_{u\in\mathcal X_B}
\nabla f_B(x)^\top(x-u).
\]
Convexity gives $0\le f_B(x)-\min_{u\in\mathcal X_B}f_B(u)\le g_{\mathrm{FW}}(x)$. The word full is important because the linear oracle must optimize over the complete declared feasible set. Under the group constraints in Section~\ref{sec-finite-pool}, this oracle is exact. After fixing the seed coordinates, it selects the smallest gradient coefficient in each available group and then the required number of groups with the smallest such coefficients.

Let $\widehat x$ be the final relaxed iterate and let $S$ be the returned feasible subset after rounding and exchanges. Write $f_B(S)=f_B(\mathbf 1_S)$, and let $f^\star_{B,\mathrm{int}}$ denote the minimum over feasible integer selections. Since every integer design is feasible for the relaxation,
\begin{equation}
0
\le
f_B(S)-f^\star_{B,\mathrm{int}}
\le
f_B(S)-f_B(\widehat x)+g_{\mathrm{FW}}(\widehat x).
\label{eq-selection-certificate}
\end{equation}
Thus the final Frank--Wolfe gap and the actual rounding-and-exchange change give a computable certificate for the returned subset under the frozen fitted criterion. This is an optimization certificate, not a statistical confidence interval and not a certificate that the fitted information equals population information. The complete optimization and integerization procedure is summarized in Algorithm~\ref{alg-naod}.

\begin{algorithm}[t]
\caption{NAOD finite-pool acquisition}
\label{alg-naod}
\begin{algorithmic}[1]
\Statex \textbf{Input} Fitted information atoms $\{\widehat J_i\}_{i=1}^N$, policy weight $\widehat G_e$, trusted information $\widehat I_c$, budget $B$, feasible set $\mathcal X_B$, seed $S_0$, tolerance $\varepsilon>0$, and iteration limit $T_{\max}$.
\State Choose $x\in\mathcal X_B$ with $\widehat{\mathcal I}(x)\succ0$.
\For{$t=1,\ldots,T_{\max}$}
    \State Compute $s_t\in\operatorname*{arg\,min}_{u\in\mathcal X_B}\nabla f_B(x)^\top u$.
    \If{$\nabla f_B(x)^\top(x-s_t)\le\varepsilon$}
        \State \textbf{break}
    \EndIf
    \State Choose $\alpha_t\in\operatorname*{arg\,min}_{\alpha\in[0,1]}f_B(x+\alpha(s_t-x))$.
    \State Update $x\gets x+\alpha_t(s_t-x)$.
\EndFor
\State Set $\widehat x\gets x$ and recompute $g_{\mathrm{FW}}(\widehat x)$.
\State Round $\widehat x$ by largest weights to a feasible set $S$ of size $B$, including $S_0$ and respecting all group capacities.
\State Apply feasible exchanges that decrease $f_B(S)$ and preserve positive-definite fitted information.
\State Freeze $S$ before observing its judge labels.
\State \Return $S$ and the certificate in \eqref{eq-selection-certificate}.
\end{algorithmic}
\end{algorithm}

The final gap is recomputed regardless of whether optimization stops by tolerance or by the iteration limit. Reaching the limit is not treated as convergence. Improving exchanges can only reduce the certificate because \eqref{eq-selection-certificate} is evaluated at the final returned subset. The selected IDs and their fitted information are frozen before any selected judge outcome is revealed.

\subsection{Statistical Guarantees for Distinct Comparisons}
\label{app-distinct-comparisons}

We now prove Theorem~\ref{thm-distinct-comparisons}. Unlike the fixed-support analysis in Appendix~\ref{app-minimax}, the selected comparisons need not repeat a finite set of feature types. The proof therefore retains each selected row and works with the realized information matrix $M_B$ appearing in the theorem.

\paragraph{Distinct-comparison regularity.}
Condition on the historical data, candidate features, and the selected set $S_B$, all fixed before the selected outcomes are observed. The dimensions $d$ and $r$ are fixed. Selected and trusted feature rows are uniformly bounded. Current labels are conditionally independent and correctly specified under the joint Bernoulli model. The parameter centers lie a fixed positive distance from the boundary of a common compact product set. Assume $n_{c,B}/B$ is bounded above and away from zero, trusted likelihood curvature has a common positive lower bound, and $\lambda_{\min}(M_B)\ge\mu>0$. The guards, projection region, and derivative bounds are the same as in Appendix~\ref{app-minimax}.

For $\gamma_{B,\eta}=\gamma_0+\eta/\sqrt B$, let $M_B(\eta)$ be the expected joint information divided by $B$ at the local truth. Bounded rows and bounded logistic derivatives imply, for every fixed $R<\infty$,
\[
\sup_{\|\eta\|\le R}\|M_B(\eta)-M_B\|_{\mathrm{op}}=O_R(B^{-1/2}).
\]
No convergence of $M_B$ is needed for this comparison.

Let $U_B$ denote the total joint score from the trusted block and the selected judge records, and define $\Delta_{B,\eta}=U_B(\gamma_{B,\eta})/\sqrt B$. There are $O(B)$ independent bounded score summands, so $\Delta_{B,\eta}$ has uniformly bounded fourth moments on bounded local parameter sets, and the normalized observed information has $L^4$ fluctuation $O(B^{-1/2})$ when trusted covariates are random. Trusted curvature gives the target preliminary an $O_{L^4}(B^{-1/2})$ error. Moreover, the nuisance principal block of $M_B$ is at least $\mu I_r$. Since logistic variances are at most $1/4$, $B^{-1}\sum_{i\in S_B}W_{Bi}W_{Bi}^\top\succeq4\mu I_r$, which supplies the strong convexity required by the offset nuisance fit. The full preliminary is therefore root-$B$ in $L^4$.

Applying the exact Newton expansion from Appendix~\ref{app-minimax} with the finite-array information retained gives
\[
\sqrt B(\widehat\gamma_B-\gamma_{B,\eta})=M_B(\eta)^{-1}\Delta_{B,\eta}+o_{L^2}(1)
\]
uniformly over bounded $\eta$ and eligible frozen selected arrays. The guard complement and final projection are negligible by the common information floor, fourth-moment bounds, and boundary margin.

Under correct specification, $\operatorname{Cov}(\Delta_{B,\eta})=M_B(\eta)$. Hence the leading expected policy quadratic form is $\tfrac12\operatorname{tr}\{G_eM_B(\eta)^{-1}\}$. The preceding $O_R(B^{-1/2})$ information comparison, the inverse-information bound, and the local Bregman expansion in \eqref{eq-app-local-policy-loss} then yield exactly \eqref{eq-distinct-policy-risk}, uniformly over bounded local parameters. This proves the risk statement in Theorem~\ref{thm-distinct-comparisons} without requiring the selected empirical proportions to approach a fixed-support allocation.

If $M_B\to M_\infty\succ0$, the bounded triangular-array central limit theorem and the same third-order likelihood expansion as in Proposition~\ref{prop:app-lan} give a LAN experiment with information $M_\infty$. The localized lower-bound argument of Appendix~\ref{app-minimax} therefore gives the corresponding conditional local minimax bound. If $G_0$ is singular, Proposition~\ref{prop:app-policy-null} applies on the policy-relevant quotient.

The theorem is conditional on the realized selected set and uses its true information. It therefore preserves the statistical effect of rounding, but it does not assert that every rounded set satisfies the information floor, nor does fitted positive definiteness imply true positive definiteness. Those are separate plug-in questions addressed next.

\subsection{Plug-In Error, Numerical Accuracy, and Computation}
\label{app-computation}

\paragraph{Plug-in error.}
Define the population counterpart by $f_B^0(x)=\tfrac12\operatorname{tr}\{G_e\mathcal I_0(x)^{-1}\}$, where $\mathcal I_0(x)=I_c+\sum_{i=1}^N x_iJ_i$, and recall that $f_B$ in \eqref{eq-finite-pool-criterion} uses the fitted quantities $\widehat G_e$ and $\widehat{\mathcal I}(x)$. Suppose both information matrices have eigenvalues at least $\underline\mu_B>0$ over the feasible family. The inverse identity and the trace inequality give
\[
|f_B(x)-f_B^0(x)|
\le
\frac{\|\widehat G_e-G_e\|_*}{2\underline\mu_B}
+
\frac{\operatorname{tr}(G_e)}{2\underline\mu_B^2}
\|\widehat{\mathcal I}(x)-\mathcal I_0(x)\|_{\mathrm{op}}.
\]
For example, bounds $\|\widehat I_c-I_c\|_{\mathrm{op}}\le\delta_c$ and $\max_i\|\widehat J_i-J_i\|_{\mathrm{op}}\le\delta_J$ imply $\sup_{x\in\mathcal X_B}\|\widehat{\mathcal I}(x)-\mathcal I_0(x)\|_{\mathrm{op}}\le\delta_c+B\delta_J$ because every feasible design has total mass $B$.

Let $\Delta_B=\sup_{x\in\mathcal X_B}|f_B(x)-f_B^0(x)|$. If $S_B^{0,\star}$ minimizes the population criterion over feasible integer selections, then the fitted optimization certificate and two applications of the uniform plug-in bound give
\[
0
\le
f_B^0(S)-f_B^0(S_B^{0,\star})
\le
2\Delta_B
+
f_B(S)-f_B(\widehat x)
+
g_{\mathrm{FW}}(\widehat x).
\]
Thus plug-in error and numerical/rounding error enter separately. A small Frank--Wolfe certificate establishes accurate optimization of the fitted objective. It does not by itself establish that the fitted objective accurately represents the population criterion or that the joint model is correctly specified.

\paragraph{Numerical accuracy of the estimator.}
The statistical proofs in Appendix~\ref{app-minimax} are written for exact trusted and offset preliminary minimizers, but approximate convex solutions preserve the required rates under a quantitative stopping rule. For a normalized convex objective $Q$ with Hessian bounded below by $cI$, let $t^\star$ be its constrained minimizer and define the global first-order gap $g_Q(t)=\max_{u\in\mathcal K}\nabla Q(t)^\top(t-u)$. Strong convexity and convexity give
\[
\frac c2\|t-t^\star\|^2\le Q(t)-Q(t^\star)\le g_Q(t).
\]
Hence gaps of order \(O(n_c^{-1})\) for the normalized trusted objective and \(O(B^{-1})\) for the normalized nuisance objective yield numerical errors of order \(O(n_c^{-1/2})\) and \(O(B^{-1/2})\), respectively, or smaller, leaving the one-step expansions unchanged. Selection line searches need not themselves prove convergence because the final Frank--Wolfe gap is recomputed at the returned relaxed point and is the quantity entering \eqref{eq-selection-certificate}.

\paragraph{Computation.}
For $k=d+r$, each fitted information atom has rank-one form $\widehat J_i=\widehat t_i\widehat v_i\widehat v_i^\top$. Storing $(\widehat t_i,\widehat v_i)$ rather than a dense $k\times k$ matrix for every candidate requires $O(Nk+k^2)$ memory. After factorizing $\widehat{\mathcal I}_x$, the gradient coordinate can be evaluated as
\[
\frac{\partial f_B(x)}{\partial x_i}
=
-\frac12\widehat t_i\,\widehat v_i^\top\widehat{\mathcal I}_x^{-1}\widehat G_e\widehat{\mathcal I}_x^{-1}\widehat v_i.
\]
A dense full information-and-gradient evaluation costs $O(Nk^2+k^3)$, with the $k^3$ term coming from the factorization. The implementation uses positive-definite factorizations and linear solves rather than forming matrix inverses explicitly. Total selection time additionally depends on the number of Frank--Wolfe iterations, line-search evaluations, rounding, and improving exchanges.

\section{Learning the Judge-Deviation Representation}
\label{app-representation-learning}

This appendix proves the representation-learning results in Sections~\ref{sec-representation-cost} and~\ref{sec-design-reversals}. We first characterize the target effect of representation error and the exact cancellation of modeled nuisance directions, then prove the finite-sample representation cost in Theorem~\ref{thm-representation-cost} and its design-ranking consequence. We finally give the higher-order population-root expansion that clarifies the limit of first-order cancellation.

\subsection{Representation Error and Oracle Recovery}
\label{app-representation-error}

Recall the learned-representation residual $e_m$ in \eqref{eq-representation-residual} and the target influence operator $B_m$ and policy kernel $R_m$ in \eqref{eq-representation-influence}. We first justify the first-order interpretation of these quantities.

\begin{proposition}[Local representation-error expansion]
\label{prop:app-representation-root}
Fix the learned representation, nuisance center, and allocation, and suppose the regularity conditions of Appendices~\ref{app-foundations-notation} and~\ref{app-minimax} hold with positive joint information $M_m$. If the true judge logits equal the nominal logits at $\gamma_{0m}$ plus a sufficiently small deterministic vector $e\in\mathbb R^L$, while the trusted model remains correctly specified at $\theta_0$, then the fitted population score has a unique local root $\gamma_m^\dagger(e)$ satisfying
\[
\gamma_m^\dagger(e)-\gamma_{0m}=M_m^{-1}\sum_{i=1}^L\xi_{m,i}t_{m,i}v_{m,i}e_i+O(\|e\|^2).
\]
Consequently,
\[
P_\theta\{\gamma_m^\dagger(e)-\gamma_{0m}\}=B_me+O(\|e\|^2).
\]
\end{proposition}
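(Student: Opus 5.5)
The plan is an implicit function theorem argument applied to the normalized population score, viewed as a function of the joint parameter and the logit perturbation. Define, for $\gamma$ near $\gamma_{0m}$ and $e\in\mathbb R^L$,
\[
\Psi(\gamma,e)=\kappa_m\,\mathbb E\bigl[\{\sigma(X_c^\top\theta_0)-\sigma(X_c^\top\theta)\}v_c\bigr]+\sum_{i=1}^L\xi_{m,i}\bigl\{\sigma(v_{m,i}^\top\gamma_{0m}+e_i)-\sigma(v_{m,i}^\top\gamma)\bigr\}v_{m,i}.
\]
Here $v_c=(X_c^\top,0^\top)^\top$ is the padded trusted feature and $\kappa_m$ is the normalized trusted proportion. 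This is the expected normalized score of the fitted joint logistic likelihood when the true judge logits are the nominal ones plus $e$. The trusted block is unchanged, since the trusted model is correctly specified at $\theta_0$.

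By construction $\Psi(\gamma_{0m},0)=0$. The map is smooth, indeed real-analytic, in $(\gamma,e)$ on a neighborhood, because features are bounded and $\sigma$ is smooth.

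The key derivative computations are as follows. Differentiating in $\gamma$ gives $-\partial_\gamma\Psi(\gamma_{0m},0)=\kappa_m H_c^{\mathrm{pad}}+\sum_i\xi_{m,i}t_{m,i}v_{m,i}v_{m,i}^\top=M_m$, which is invertible by hypothesis. Differentiating in $e_i$ at $e=0$ gives $\xi_{m,i}\sigma'(v_{m,i}^\top\gamma_{0m})v_{m,i}=\xi_{m,i}t_{m,i}v_{m,i}$.

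The implicit function theorem then gives a neighborhood $U$ of $\gamma_{0m}$, a radius $\epsilon>0$, and a $C^2$ map $e\mapsto\gamma_m^\dagger(e)$ on $\|e\|<\epsilon$ with $\Psi(\gamma_m^\dagger(e),e)=0$. This root is the unique zero of $\Psi(\cdot,e)$ in $U$. Its derivative at $0$ is $M_m^{-1}\sum_i\xi_{m,i}t_{m,i}v_{m,i}e_i$. A second-order Taylor bound with bounded second derivatives on the compact neighborhood turns this into the stated $O(\|e\|^2)$ remainder.

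For uniqueness beyond a neighborhood, I would use that the fitted negative log-likelihood is globally convex (Appendix~\ref{app-foundations-lan}). I would also use that its Hessian is $\succeq$ a positive multiple of $M_m$ near $\gamma_{0m}$, by continuity and the interior lower bound $t_{\min}$. Strict convexity on a ball then makes any root in the ball the global minimizer of the population objective, so the root is unique.

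Applying $P_\theta$ to the expansion gives the second display immediately, by the definition of $B_m$ in \eqref{eq-representation-influence}.

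The main obstacle is making the constants in the $O(\|e\|^2)$ term and the neighborhood size uniform. They must not depend on the particular learned representation, center, or allocation, since this uniformity is what the later use inside Theorem~\ref{thm-representation-cost} needs. I would handle this with a quantitative inverse function theorem. The inputs are a uniform lower bound $\lambda_{\min}(M_m)\ge\mu$ and a uniform Lipschitz bound on $\partial_\gamma\Psi$, which follows from bounded features and $|\sigma''|\le1$. Together these give an explicit radius $\epsilon\asymp\mu^2$ and remainder constant $\asymp\mu^{-1}$ that depend only on these bounds.
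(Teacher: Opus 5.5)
Your proposal is correct and takes essentially the same route as the paper. You apply the implicit function theorem to the normalized expected fitted score, using $\Psi(\gamma_{0m},0)=0$, $-\partial_\gamma\Psi=M_m$ and $\partial_e\Psi\,e=\sum_i\xi_{m,i}t_{m,i}v_{m,i}e_i$ at $(\gamma_{0m},0)$, and then apply $P_\theta$ to obtain $B_me$. The convexity argument for global uniqueness and the quantitative-IFT remark on uniform constants are reasonable additions that the paper leaves implicit in its appeal to the inverse-information bound and bounded logistic derivatives (only uniformity in $\mu$ and the feature bounds matters there, not the particular powers of $\mu$ you quote).
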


\begin{proof}
Let $\Psi_m(\gamma,e)$ denote the normalized expected fitted score when the true judge logit at type $i$ is $v_{m,i}^\top\gamma_{0m}+e_i$, while the trusted probabilities remain those of $\theta_0$. At $(\gamma_{0m},0)$, the score is zero, its derivative with respect to $\gamma$ is $-M_m$, and its derivative with respect to $e$ applied to a direction $e$ is $\sum_i\xi_{m,i}t_{m,i}v_{m,i}e_i$. The inverse-information bound and bounded logistic derivatives give a local implicit-function expansion with the stated linear term and an $O(\|e\|^2)$ remainder. Taking target coordinates and using the definition of $B_m$ in \eqref{eq-representation-influence} gives the second display.
\end{proof}

The operator $B_m$ annihilates representation errors that lie in the learned nuisance span. In particular, for any $c\in\mathbb R^r$,
\[
B_m\widehat W_m c
=
P_\theta M_m^{-1}\sum_{i=1}^L\xi_{m,i}t_{m,i}v_{m,i}\widehat W_{m,i}^\top c
=
P_\theta M_m^{-1}M_m\binom{0}{c}
=
0,
\]
so $B_m\widehat W_m=0$. Thus an error component in the column span of $\widehat W_m$ has no first-order effect on the target parameter. More strongly, if $e=\widehat W_m c$ and $a_{0m}+c$ remains in the admissible interior neighborhood, then $\gamma_{0m}+(0^\top,c^\top)^\top$ reproduces every perturbed judge logit exactly while leaving the trusted model unchanged. Such a represented error is therefore absorbed entirely by the nuisance coefficient, although estimating that coefficient still reduces effective target information through \eqref{eq-effective-information}.

Proposition~\ref{prop:app-representation-root} also gives a generic oracle-recovery condition. Along a deterministic sequence of learned representations and designs, if $\|e_m\|=o(n^{-1/4})$ and $\sqrt n\,\|B_me_m\|\to0$, then the target population-root displacement is $o(n^{-1/2})$. The nearby-root one-step expansion in Appendix~\ref{app-minimax} then recovers the oracle leading policy risk. For random representation errors, expected-risk recovery additionally requires the moment control used below.

\subsection{Finite-Sample Representation Cost and Design Reversals}
\label{app-representation-finite}

\paragraph{Representation-learning regularity.}
Let $\mathcal F_m$ denote the historical information used to learn the judge-deviation representation. Conditional on $\mathcal F_m$, the aligned representation $\widehat W_m$, nuisance center $a_{0m}$, and realized allocation proportions $\xi_{m,i}$ are fixed before the current outcomes are observed. We assume fixed support and parameter dimensions, common bounded feature and logit ranges, a common compact product parameter set whose nominal centers $\gamma_{0m}=(\theta_0^\top,a_{0m}^\top)^\top$ remain a fixed positive distance from the boundary, and the trusted-curvature, nuisance-curvature, guard, and projection conditions of Appendix~\ref{app-minimax} with common constants. Conditional on $\mathcal F_m$, current judge outcomes are independent Bernoulli variables with their actual probabilities and are independent of a correctly specified trusted block. The trusted-to-judge sample ratio is bounded above and away from zero, and $M_m\succeq\mu I$ for some fixed $\mu>0$. In addition to the convergence assumptions in Theorem~\ref{thm-representation-cost}, assume $\sup_m\mathbb E\|\sqrt m\,e_m\|^4<\infty$.

\begin{proof}[Proof of Theorem~\ref{thm-representation-cost}]
Condition on $\mathcal F_m$. On the event that $\|e_m\|$ lies in the common local neighborhood, Proposition~\ref{prop:app-representation-root} gives a population root $\gamma_m^\dagger$ satisfying
\[
\gamma_m^\dagger-\gamma_{0m}=M_m^{-1}\sum_{i=1}^L\xi_{m,i}t_{m,i}v_{m,i}e_{m,i}+O(\|e_m\|^2).
\]
The fourth-moment assumption gives $\mathbb P(\|e_m\|>\varepsilon)=O(m^{-2})$ for every fixed sufficiently small $\varepsilon>0$. Since $n/m$ is bounded and policy loss is bounded on the common compact parameter set, these exceptional historical samples contribute $o(1)$ to the expected policy loss after multiplication by $n$.

On the local event, let
\[
\Delta_m=\frac{U_n(\gamma_m^\dagger)}{\sqrt n},\qquad A_m^\dagger=\frac1n\mathbb E[I_n(\gamma_m^\dagger)\mid\mathcal F_m].
\]
Because $\gamma_m^\dagger$ is the conditional population root, $\mathbb E[\Delta_m\mid\mathcal F_m]=0$. Bounded logistic derivatives and Proposition~\ref{prop:app-representation-root} imply $A_m^\dagger=M_m+O(\|e_m\|)$ and $\operatorname{Cov}(\Delta_m\mid\mathcal F_m)=M_m+O(\|e_m\|)$.

The trusted-plus-offset preliminary has $L^4$ distance $O(n^{-1/2}+m^{-1/2})$ from $\gamma_m^\dagger$. Since $n/m$ is bounded, the Newton argument of Appendix~\ref{app-minimax}, retaining the conditional matrix $A_m^\dagger$, gives
\[
\sqrt n(\widehat\gamma_{n,m}-\gamma_m^\dagger)=(A_m^\dagger)^{-1}\Delta_m+o_{L^2}(1).
\]
Combining this expansion with the population-root expansion and taking target coordinates yields
\[
\sqrt n(\widehat\theta_{n,m}-\theta_0)=P_\theta(A_m^\dagger)^{-1}\Delta_m+\sqrt n\,B_me_m+o_{L^2}(1).
\]
Indeed, the omitted quadratic root remainder is negligible because $\sqrt n\,\|e_m\|_{L^4}^2=O(\sqrt n/m)=o(1)$. The first leading term has conditional mean zero given $\mathcal F_m$, whereas $B_me_m$ is $\mathcal F_m$-measurable, so their cross second moment vanishes.

Using the local policy expansion in \eqref{eq-app-local-policy-loss}, the common Hessian bound, and the fourth-moment controls therefore gives the finite-sequence identity
\[
n\,\mathbb E D_F(\theta_0,\widehat\theta_{n,m})=\mathbb E\Phi_m+\frac n2\,\mathbb E[e_m^\top R_me_m]+o(1).
\]
This identity separates current-sample variation from representation-induced displacement before taking the historical-sample limit.

Write $Z_m=\sqrt m\,e_m$. The information floor uniformly bounds $\Phi_m$ and $\|R_m\|_{\mathrm{op}}$, so $\Phi_m\to\Phi_0$ in probability implies $\mathbb E\Phi_m\to\Phi_0$. Moreover, $Z_m\Rightarrow Z_P$ and $R_m\to R_0$ in probability imply $Z_m^\top R_mZ_m\Rightarrow Z_P^\top R_0Z_P$. The uniform fourth-moment bound makes these quadratic forms uniformly integrable, and hence
\[
\mathbb E[Z_m^\top R_mZ_m]\longrightarrow\mathbb E[Z_P^\top R_0Z_P]=\operatorname{tr}(R_0\Sigma_P)+b_P^\top R_0b_P.
\]
Since $n/m\to\lambda_P$, substitution into the preceding finite-sequence identity and division by $n$ give exactly the risk expansion in \eqref{eq-representation-risk}.
\end{proof}

The same expansion gives the ranking result in Section~\ref{sec-design-reversals}. Applying \eqref{eq-representation-risk} to NAOD and Target-info under the same representation-learning protocol, label budgets, and final estimator, and subtracting the two risks, gives \eqref{eq-design-reversal}. Thus the oracle ordering is recovered when $n/m\to0$, whereas for $n/m\to\lambda_P>0$ the design-dependent representation cost can contribute at the same leading order as the oracle sampling risk.

\subsection{Higher-Order Representation Effects}
\label{app-representation-higher-order}

First-order influence does not by itself characterize finite representation error. We therefore record the second-order population-root expansion that determines when cancellation of $B_me_m$ is sufficient for oracle recovery.

For a fixed regular design, let $V=[v_1,\ldots,v_L]$, $\Lambda_\xi=\operatorname{diag}(\xi_1,\ldots,\xi_L)$, $T=\operatorname{diag}(t_1,\ldots,t_L)$, and define $\mathcal L_\xi=M^{-1}V\Lambda_\xi T$. For $v_c=(X_c^\top,0^\top)^\top$, define
\[
\begin{aligned}
Q_J(e,f)
&=
\sum_{i=1}^L\xi_i\sigma''(v_i^\top\gamma_0)v_i e_if_i,\\
Q_M(h,g)
&=
\sum_{i=1}^L\xi_i\sigma''(v_i^\top\gamma_0)v_i(v_i^\top h)(v_i^\top g)
+\kappa\,\mathbb E\!\left[\sigma''(X_c^\top\theta_0)v_c(v_c^\top h)(v_c^\top g)\right].
\end{aligned}
\]

\begin{proposition}[Second-order population-root expansion]
\label{prop:app-second-order-root}
Suppose the true judge logits are $v_i^\top\gamma_0+e_i$, while the trusted model remains correctly specified at $\theta_0$. For sufficiently small $e$, the nearby fitted population root satisfies
\[
\gamma^\dagger(e)-\gamma_0
=
s_1(e)+s_2(e)+O(\|e\|^3),
\]
\[
s_2(e)
=
\frac12M^{-1}\!\left\{
Q_J(e,e)-Q_M(s_1(e),s_1(e))
\right\},
\]
where $s_1(e)=\mathcal L_\xi e$.
If $e=Wb$ and $a_0+b$ remains in the admissible interior neighborhood, then $\gamma^\dagger(e)=\gamma_0+(0^\top,b^\top)^\top$ exactly, so the target displacement vanishes to every order and in particular $s_2(Wb)=0$.
\end{proposition}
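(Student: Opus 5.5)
We will expand the normalized population score to second order around $(\gamma_0,0)$ and solve for the root order by order. Let
\[
\Psi(\gamma,e)=\sum_{i=1}^L\xi_i\bigl\{\sigma(v_i^\top\gamma_0+e_i)-\sigma(v_i^\top\gamma)\bigr\}v_i+\kappa\,\mathbb E\bigl[\{\sigma(X_c^\top\theta_0)-\sigma(v_c^\top\gamma)\}v_c\bigr],
\]
so that $\Psi(\gamma_0,0)=0$. The true judge probability enters only through the first bracket and the trusted block is correctly specified. Proposition~\ref{prop:app-representation-root} already supplies a unique local root $\gamma^\dagger(e)$ via the implicit function theorem, with first-order term $M^{-1}\sum_i\xi_it_iv_ie_i=\mathcal L_\xi e=s_1(e)$. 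Since $\sigma$ is smooth with bounded derivatives and $M\succ0$, the root map $e\mapsto\gamma^\dagger(e)$ is $C^3$ near $0$. A Taylor expansion to second order with an $O(\|e\|^3)$ remainder is therefore legitimate, and it remains to identify the quadratic coefficient.

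For the second-order term, write $u=\gamma^\dagger(e)-\gamma_0=s_1+s_2+O(\|e\|^3)$ with $s_2$ quadratic in $e$. Expand each term of $\Psi(\gamma_0+u,e)=0$. The judge data term gives $\sum_i\xi_i\{t_ie_i+\tfrac12\sigma''(v_i^\top\gamma_0)e_i^2\}v_i+O(\|e\|^3)$, which equals $M\mathcal L_\xi e+\tfrac12 Q_J(e,e)$ up to the part of $M\mathcal L_\xi e$ that involves only judge rows. To check this, note that $V\Lambda_\xi Te=\sum_i\xi_it_iv_ie_i$, so $M\mathcal L_\xi e=\sum_i\xi_it_iv_ie_i$ exactly. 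The fitted-probability terms from judge and trusted rows together give $-Mu-\tfrac12Q_M(u,u)+O(\|u\|^3)$. Here $Q_M$ collects $\sigma''$ along each row, including the trusted expectation weighted by $\kappa$.

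Collect orders next. The first order gives $Ms_1=\sum_i\xi_it_iv_ie_i$, which is consistent. The second order gives $Ms_2=\tfrac12\{Q_J(e,e)-Q_M(s_1,s_1)\}$, because $Q_M(u,u)=Q_M(s_1,s_1)+O(\|e\|^3)$. Inverting $M$ yields the displayed formula. The care needed here is bookkeeping: checking that $\sigma''$ evaluated at $v_c^\top\gamma_0=X_c^\top\theta_0$ matches the trusted term in $Q_M$, and that mixed terms are truly third order.

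For the exact-absorption claim, take $e=Wb$ and set $\gamma=\gamma_0+(0^\top,b^\top)^\top$. Then $v_i^\top\gamma=v_i^\top\gamma_0+W_i^\top b=v_i^\top\gamma_0+e_i$ for every $i$, so every judge bracket vanishes. The trusted bracket also vanishes, since $v_c^\top\gamma=X_c^\top\theta_0$. Hence $\gamma$ is a root of $\Psi(\cdot,Wb)$, and it lies in the admissible neighborhood by assumption. By local uniqueness from Proposition~\ref{prop:app-representation-root}, this root equals $\gamma^\dagger(Wb)$, so the target displacement is exactly zero. The map $t\mapsto\gamma^\dagger(tWb)-\gamma_0=t(0,b)$ is linear. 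Comparing the $t^2$ coefficients of the expansion applied to $e=tWb$ therefore gives $s_2(Wb)=0$.

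The step we expect to be the main obstacle is the uniqueness argument used for the exact-root claim. We need the implicit-function neighborhood to contain $\gamma_0+(0,b)$ for all admissible $b$, not just small ones. If the neighborhood is too small, we would fall back on strict convexity of the global fitted objective, since $Q_n$ is convex and $M\succ0$. That gives uniqueness of the population root on the whole compact interior and settles the claim.
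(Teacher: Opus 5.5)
Your proposal is correct and follows the same route as the paper: a second-order Taylor expansion of the normalized population score $\Psi$ around $(\gamma_0,0)$, order matching via $M s_1 = V\Lambda_\xi T e$ and $Q_M(u,u)=Q_M(s_1,s_1)+O(\|e\|^3)$, and exhibiting $\gamma_0+(0^\top,b^\top)^\top$ as an exact root when $e=Wb$. Your explicit uniqueness argument fills in what the paper leaves implicit, with one correction: the strict convexity you need is of the expected (population) objective, not merely convexity of $Q_n$, and it holds globally because $\sigma'>0$ everywhere and $M\succ0$ keep the weighted feature Gram nonsingular at every $\gamma$.
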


\begin{proof}
Let $\ell_i=v_i^\top\gamma_0$. The normalized population score at $\gamma_0+h$ under logit perturbation $e$ is
\[
\Psi(h,e)=\sum_{i=1}^L\xi_i\{\sigma(\ell_i+e_i)-\sigma(\ell_i+v_i^\top h)\}v_i+\kappa\,\mathbb E\!\left[\{\sigma(X_c^\top\theta_0)-\sigma(X_c^\top\theta_0+v_c^\top h)\}v_c\right].
\]
At $(h,e)=(0,0)$, the derivatives with respect to $h$ and $e$ are $-M$ and $V\Lambda_\xi T$, respectively, so the local root satisfies $h=O(\|e\|)$. A second-order Taylor expansion of the true-mean and fitted-mean terms gives
\[
0=V\Lambda_\xi T e+\frac12Q_J(e,e)-Mh-\frac12Q_M(h,h)+O(\|e\|^3+\|h\|^3).
\]
The first-order relation gives $h-s_1(e)=O(\|e\|^2)$. Bilinearity of $Q_M$ then allows $Q_M(h,h)$ to be replaced by $Q_M(s_1,s_1)$ at third-order error, yielding the stated expansion. If $e=Wb$, shifting only the nuisance coefficient reproduces the perturbed judge logits exactly and leaves the trusted probabilities unchanged, establishing exact cancellation.
\end{proof}

Proposition~\ref{prop:app-second-order-root} explains why first-order cancellation alone is insufficient at the $n^{-1/4}$ scale. Along a sequence of learned representations and designs, a sufficient generic condition for representation error to be negligible at the oracle root-$n$ scale is
\[
\sqrt n\,\|B_me_m\|\to0,\qquad \|e_m\|=o(n^{-1/4}),
\]
together with the corresponding moment control when $e_m$ is random. Errors lying exactly in the learned nuisance span are different because they are absorbed by recentering the nuisance coefficient and are not subject to this generic first-order remainder condition.

\section{Synthetic Experiments}
\label{app-controlled-experiments}

This appendix provides the constructions and diagnostics underlying the synthetic results in Section~\ref{sec-controlled-experiments}. Section~\ref{app-controlled-protocol} specifies the common protocol, Section~\ref{app-controlled-mechanisms} gives the complete numerical evidence behind Figure~\ref{fig-controlled}, and Section~\ref{app-controlled-additional} tests mechanism boundaries not needed in the main text.

\subsection{Experimental Protocol}
\label{app-controlled-protocol}

\paragraph{Common experimental setup.}
Unless an intervention explicitly changes one component, all controlled studies use independent Bernoulli observations from the canonical joint-logit model in \eqref{eq-joint-judge-model}, with the same target and nuisance representations, parameter region, and final estimator across acquisition rules. Oracle studies condition on a fixed nuisance representation. Representation-learning studies first generate independent historical human--judge data, freeze the learned representation and acquisition rule, and then generate independent current trusted and judge observations. Thus acquisition comparisons change the information collected rather than the outcome-generating law or final estimation procedure.

\paragraph{Estimator and numerical implementation.}
The main studies use the projected one-step estimator of Appendix~\ref{app-minimax}. Trusted logistic regression supplies the target preliminary, an offset nuisance fit supplies the nuisance preliminary, and one guarded joint update is followed by projection. Numerical search and safeguard settings are summarized in Table~\ref{tab:controlled-protocol}. Numerical damping is used only to stabilize optimization and is never counted as statistical information. Fixed-support theoretical criteria are evaluated using the realized integer counts after rounding. The distinct-comparison experiment instead uses the finite-pool procedure of Algorithm~\ref{alg-naod} and the certificate in \eqref{eq-selection-certificate}.

\begin{table}[htbp]
\centering
\small
\setlength{\tabcolsep}{5pt}
\renewcommand{\arraystretch}{1.15}
\caption{\textbf{Common controlled-study protocol.} Experiment-specific constructions and sample sizes are given in Sections~\ref{app-controlled-mechanisms}--\ref{app-controlled-additional}.}
\label{tab:controlled-protocol}
\begin{tabular}{@{}p{0.31\linewidth}p{0.64\linewidth}@{}}
\hline
Component & Convention \\
\hline
Current outcomes & Independent Bernoulli trusted and judge labels \\
Final estimator & Trusted preliminary + offset nuisance preliminary + one joint update \\
Target / nuisance radii & \(2\) and \(3\), respectively, unless stated otherwise \\
Preliminary gap tolerance & \(10^{-6}\) divided by fit sample size \\
Nuisance-Gram threshold & \(10^{-10}\) \\
Joint-information guard & \(10^{-5}\) \\
Numerical damping & \(10^{-12}I\), search only \\
Final likelihood & Unpenalized \\
Fixed-support design solver & SLSQP, analytic gradient, tolerance \(10^{-12}\), cap \(600\) \\
Finite-pool procedure & Algorithm~\ref{alg-naod} framework, iteration cap \(180\) \\
Primary endpoint & Scaled policy loss \(nD_F\), unless stated otherwise \\
\hline
\end{tabular}
\end{table}

\paragraph{Replication units and uncertainty.}
For fixed-support studies, one independently generated Bernoulli dataset is one Monte Carlo replication, and MC SE is the sample standard deviation across replications divided by the square root of their number. In nested representation-learning studies, current losses are first averaged within each independent historical realization and uncertainty is computed across those historical realizations. In the distinct-pool study, current losses are first averaged within each independently generated pool and uncertainty is computed across pools. Error bars in the appendix figures are \(1.96\) MC SE. Inner current draws are therefore not treated as additional independent outer replications.

\subsection{Controlled Mechanism Studies}
\label{app-controlled-mechanisms}

\paragraph{Effective information and policy risk.}
The experiment behind Figure~\ref{fig-controlled}(a) uses target features \(X=(3,2,-1)\), constant nuisance feature \(W=1\), target center \(\theta_0=0\), nuisance center \(a_0=\log(3/2)\), and \(n_c=n\) trusted labels with feature one. The judge probability is \(0.6\) at every comparison type. The evaluation policy has feature gap two and temperature one, so \(F(\theta)=\log(1+e^{2\theta})\) up to an additive constant and \(G_0=1\). Every allocation has coordinate floor \(0.05\). NAOD, Target-info, and Random share the same correctly specified joint model and projected one-step estimator and differ only in acquisition. Each method--budget pair uses \(6{,}000\) independent datasets, and \(\Phi_\kappa\) is evaluated at the realized integer allocation.

\begin{table}[htbp]
\centering
\small
\setlength{\tabcolsep}{6pt}
\renewcommand{\arraystretch}{1.12}
\caption{\textbf{Effective-information experiment underlying Figure~\ref{fig-controlled}(a).} Each row uses \(6{,}000\) independent repetitions. \(\Phi\) is evaluated using the realized integer counts.}
\label{tab:controlled-effective}
\begin{tabular*}{\linewidth}{@{\extracolsep{\fill}}c l c c c}
\hline
\(n\) & Design & \(\Phi\) & Mean \(nD_F\) & MC SE \\
\hline
240   & NAOD        & 0.426 & 0.430 & 0.008 \\
240   & Target-info & 1.139 & 1.164 & 0.021 \\
240   & Random      & 0.530 & 0.527 & 0.010 \\
960   & NAOD        & 0.426 & 0.441 & 0.008 \\
960   & Target-info & 1.139 & 1.161 & 0.022 \\
960   & Random      & 0.530 & 0.518 & 0.010 \\
3840  & NAOD        & 0.426 & 0.426 & 0.008 \\
3840  & Target-info & 1.139 & 1.105 & 0.020 \\
3840  & Random      & 0.530 & 0.526 & 0.009 \\
15360 & NAOD        & 0.426 & 0.431 & 0.008 \\
15360 & Target-info & 1.139 & 1.144 & 0.021 \\
15360 & Random      & 0.530 & 0.541 & 0.010 \\
\hline
\end{tabular*}
\end{table}

Table~\ref{tab:controlled-effective} shows that empirical risks track the information constants across all four budgets. Because the likelihood and final estimator are identical across methods, the Target-info gap isolates acquisition geometry. Raw target-score variation is valuable only to the extent that it remains distinguishable from nuisance variation. This variance mechanism is separate from the exposure effect in \eqref{eq-residual-exposure}. Under reference weights \((0.1,0.2,0.7)\), the probability-residual score balances to zero, whereas uniform exposure gives \(b_\xi=2/15\).

\paragraph{Finite-sample representation-learning cost.}
The experiment behind Figure~\ref{fig-controlled}(b) uses two equally weighted comparison types with \(X=(1,-1)^\top\), true nuisance direction \(W_0=(1,1)^\top\), \(\theta_0=0\), and \(a_0=c=\log(3/2)\). Human probabilities are \(0.5\) and judge probabilities are \(0.6\). The historical sample contains \(m\) independent units, each contributing one human and one judge label, with the units split equally across the two comparison types. Let \(\widehat h_i\) and \(\widehat j_i\) denote the corresponding sample means, let \(s=1/4\), \(t=0.24\), and set \(g=1/2\). The historical learner first estimates the rotation
\[
\delta_m^{\rm raw}
=
\frac{g\{(\widehat j_1-\widehat h_1)-(\widehat j_2-\widehat h_2)\}}{2ct}.
\]
We then set \(\widehat\delta_m=\operatorname{clip}(\delta_m^{\rm raw},-0.5,0.5)\) and \(\widehat W_m=(1+\widehat\delta_m,1-\widehat\delta_m)^\top\).
The current design is equally weighted with \(n_c=n\). Propagating the historical rotation through the target-influence operator gives \(\Phi_0=50/49\) and \(C_P=25/98\), so \eqref{eq-representation-risk} predicts scaled risk \(\Phi_0+(n/m)C_P\). Each setting below uses \(6{,}000\) independent historical--current replications.

\begin{table}[htbp]
\centering
\small
\setlength{\tabcolsep}{5pt}
\renewcommand{\arraystretch}{1.12}
\caption{\textbf{Representation-learning cost underlying Figure~\ref{fig-controlled}(b).} Prediction is \(\Phi_0+(n/m)C_P\). ``Clip'' is the percentage of historical fits for which the bounded rotation is active.}
\label{tab:controlled-representation-cost}
\begin{tabular*}{\linewidth}{@{\extracolsep{\fill}}c c c c c c}
\hline
\(n\) & \(m/n\) & Prediction & Mean \(nD_F\) & MC SE & Clip (\%) \\
\hline
960   & \(1/4\) & 2.041 & 1.891 & 0.034 & 3.1 \\
960   & \(1\)   & 1.276 & 1.252 & 0.023 & 0.0 \\
960   & \(4\)   & 1.084 & 1.085 & 0.020 & 0.0 \\
3840  & \(1/4\) & 2.041 & 1.993 & 0.037 & 0.0 \\
3840  & \(1\)   & 1.276 & 1.281 & 0.024 & 0.0 \\
3840  & \(4\)   & 1.084 & 1.116 & 0.020 & 0.0 \\
15360 & \(1/4\) & 2.041 & 2.046 & 0.037 & 0.0 \\
15360 & \(1\)   & 1.276 & 1.299 & 0.024 & 0.0 \\
15360 & \(4\)   & 1.084 & 1.091 & 0.020 & 0.0 \\
\hline
\end{tabular*}
\end{table}

At fixed \(m/n\), the predicted scaled risk is asymptotically constant rather than vanishing with the absolute current sample size. The empirical values in Table~\ref{tab:controlled-representation-cost} approach these common levels as both samples grow, confirming that representation learning contributes at the same leading order as current-sample variance when \(n/m\) does not vanish. The largest discrepancy occurs in the only setting with nonzero clipping, identifying a finite-sample departure from the local regime rather than a different asymptotic mechanism.

\paragraph{Design-ranking reversal.}
The experiment behind Figure~\ref{fig-controlled}(c) returns to the three-type persistent-residual construction with \(n=n_c=960\). For each \(m/n\in\{1/4,1/2,1,2\}\), we generate \(300\) independent balanced historical datasets, each containing \(m\) independent units with one human and one judge label per unit. At each comparison type, clipped human and judge sample means are transformed to logits and differenced. The resulting discrepancy vector is normalized to form the learned one-dimensional nuisance representation. A historical trusted logistic fit supplies the target center and an offset logistic fit supplies the nuisance coefficient. NAOD and Target-info use this same learned representation, current budgets, and final estimator.

Each historical realization generates eight current datasets per design. Losses are first averaged within the historical realization, and paired uncertainty is computed across the \(300\) outer averages. Table~\ref{tab:controlled-reversal} reports the NAOD-minus-Target-info scaled-risk difference. Positive values favor Target-info.

\begin{table}[htbp]
\centering
\small
\setlength{\tabcolsep}{8pt}
\renewcommand{\arraystretch}{1.12}
\caption{\textbf{Design-ranking reversal underlying Figure~\ref{fig-controlled}(c).} Differences are paired over \(300\) independent historical realizations, each averaging eight current datasets. Intervals are pointwise two-sided \(t_{299}\) intervals.}
\label{tab:controlled-reversal}
\begin{tabular*}{\linewidth}{@{\extracolsep{\fill}}c c c c}
\hline
\(m/n\) & NAOD--Target-info & Outer SE & 95\% interval \\
\hline
\(1/4\) & \(+1.588\) & 0.237 & \([1.122,\,2.054]\) \\
\(1/2\) & \(+0.772\) & 0.200 & \([0.378,\,1.165]\) \\
\(1\)   & \(-0.039\) & 0.100 & \([-0.235,\,0.157]\) \\
\(2\)   & \(-0.446\) & 0.049 & \([-0.541,\,-0.350]\) \\
\hline
\end{tabular*}
\end{table}

The sign change is the empirical counterpart of \eqref{eq-design-reversal}. Representation sensitivity dominates when historical information is scarce, the two leading contributions nearly balance around \(m/n=1\), and the oracle acquisition advantage dominates once representation error is sufficiently reduced. Because both designs use the same learned nuisance model and final estimator, the reversal is induced by how acquisition responds to representation error rather than by a difference in fitted model class.

\subsection{Additional Ablations and Diagnostics}
\label{app-controlled-additional}

\paragraph{The direction of omitted error matters.}
The representation analysis in Appendix~\ref{app-representation-error} predicts that policy cost depends on how omitted logit error passes through the target-influence operator, not on its norm alone. At the zero-center NAOD allocation, let \(u_1\) be the unit direction aligned with the target-influence row. A near-null direction \(u_2\) is obtained by projecting the all-ones vector onto the orthogonal complement of \(u_1\) and rotating it by \(\pi/12\) toward \(u_1\). At \(n=3840\), both directions receive the same magnitudes \(\rho\in\{0,1,2,3,4,6,8\}\), with true judge logits perturbed by \(\rho u_j/\sqrt n\). Each point uses \(6{,}000\) independent datasets.

\begin{figure}[htbp]
    \centering
    \includegraphics[width=\linewidth]{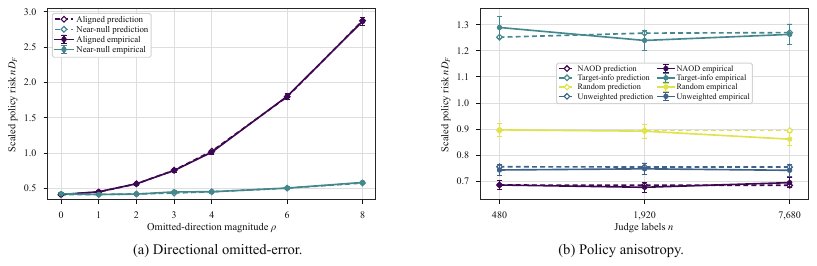}
    \vspace{-7mm}
    \caption{
        \textbf{Controlled diagnostics.}
        \textbf{(a) Directional omitted-error diagnostic.}
        Dashed curves are local predictions from the frozen logit-influence kernel. Solid curves are empirical mean scaled policy risks.
        The two perturbation directions have identical Euclidean magnitudes.
        \textbf{(b) Policy-anisotropy diagnostic.}
        Dashed curves are information-based predictions. Solid curves are empirical mean scaled policy risks.
        The Unweighted rule changes only the acquisition weight and is evaluated under the same policy loss.
        Error bars show \(1.96\) MC SE over \(6{,}000\) independent datasets.
    }
    \label{fig:controlled-diagnostics}
\end{figure}

Figure~\ref{fig:controlled-diagnostics}(a) shows that equal error magnitude can produce sharply different policy costs. The aligned perturbation follows the predicted quadratic increase, whereas the near-null perturbation remains much less consequential. Residual magnitude alone therefore cannot rank acquisition risk. The relevant quantity is its policy-weighted target projection through the target-influence operator and \(G_0\).

\paragraph{Policy weighting changes which information is useful.}
To isolate \(G_0\), we use target rows \((3,0),(2,0),(-1,0),(0,3),(0,2),(0,-1)\), constant nuisance feature \(W=1\), \(\theta_0=0\), and \(a_0=\log(3/2)\). Half of the trusted observations use each coordinate direction, giving \(H_c=I_2/8\). The evaluation distribution weights two binary-action contexts by \(0.8\) and \(0.2\), each with feature gap two, so
\[
F(\theta)=0.8\log(1+e^{2\theta_1})+0.2\log(1+e^{2\theta_2}),\qquad G_0=\operatorname{diag}(0.8,0.2).
\]
The Unweighted ablation replaces this policy curvature by \(I_2/2\) only during acquisition and is evaluated under the same true policy loss. The allocation floor is \(0.03\), \(n_c=n\), and each setting uses \(6{,}000\) repetitions.

Figure~\ref{fig:controlled-diagnostics}(b) shows that NAOD remains below the Unweighted design across all budgets, even though both account for nuisance estimation. The gap therefore isolates policy weighting. \(I_{\mathrm{eff}}\) determines which target directions remain identifiable after nuisance adjustment, while \(G_0\) determines which of those directions matter for downstream policy loss. Target-info incurs the additional cost of ignoring target--nuisance coupling.

\paragraph{Nuisance rank separates misspecification from information loss.}
Using the same two-dimensional construction at \(n=3840\), we vary only the fitted nuisance rank while keeping the actual judge probability at \(0.6\). Underfit uses no nuisance feature, Correct uses the constant feature, Overfit adds an unnecessary axis-group contrast, and Saturated uses \(W=I_6\). In the saturated model, every judge target-score direction lies in the nuisance span.

\begin{table}[htbp]
\centering
\small
\setlength{\tabcolsep}{7pt}
\renewcommand{\arraystretch}{1.12}
\caption{\textbf{Nuisance-rank ablation at \(n=3840\).} The Underfit nominal criterion is not a true-risk prediction because its fitted model omits the actual judge discrepancy.}
\label{tab:controlled-rank}
\begin{tabular*}{\linewidth}{@{\extracolsep{\fill}}l c c c c}
\hline
Fitted model & \(r\) & Nominal \(\Phi\) & Mean \(nD_F\) & MC SE \\
\hline
Underfit  & 0 & 0.391 & 27.562 & 0.064 \\
Correct   & 1 & 0.686 & 0.699  & 0.010 \\
Overfit   & 2 & 0.772 & 0.764  & 0.011 \\
Saturated & 6 & 4.000 & 4.015  & 0.059 \\
\hline
\end{tabular*}
\end{table}

Table~\ref{tab:controlled-rank} separates three distinct effects. Underfitting creates omitted-discrepancy bias, so its favorable nominal criterion is misleading. Overfitting remains correctly specified but incurs a variance cost from an unnecessary nuisance direction. Saturation is also correctly specified, but the Schur complement removes all judge-derived target information because nuisance variation can reproduce every judge target-score direction. Nuisance dimension is therefore neither uniformly beneficial nor uniformly harmful. Its effect depends on representation adequacy and target--nuisance geometry.

\paragraph{Distinct finite-pool selection.}
We generate \(24\) independent pools, each containing \(N=600\) target-feature rows sampled uniformly from \([-2,2]^2\). We set \(W_i=1\), \(\theta_0=0\), \(a_0=\log(3/2)\), and use \(300\) trusted labels with balanced coordinate features and the same \(0.8/0.2\) policy weighting as above. Judge budgets are \(B\in\{60,180,360\}\). NAOD and Target-info use the finite-pool relaxation, rounding, and exchange procedure of Algorithm~\ref{alg-naod} with their respective frozen objectives and a common charged seed, while Random samples the remaining IDs without replacement. For each pool and budget, \(300\) independent outcome datasets are generated. Only selected outcomes enter the fit, and uncertainty is computed across the \(24\) pool-level mean losses.

\begin{table}[htbp]
\centering
\small
\setlength{\tabcolsep}{6pt}
\renewcommand{\arraystretch}{1.12}
\caption{\textbf{Distinct finite-pool experiment.} Risks are reported in \(10^{-3}\) units. Nominal risk uses the actual selected-subset information. SE is computed across \(24\) independent pool-level means, each averaging \(300\) current datasets.}
\label{tab:controlled-finite-pool}
\begin{tabular*}{\linewidth}{@{\extracolsep{\fill}}c l c c c}
\hline
\(B\) & Design & Nominal risk & Mean \(D_F\) & Pool SE \\
& & \((\times10^{-3})\) & \((\times10^{-3})\) & \((\times10^{-3})\) \\
\hline
60  & NAOD        & 6.108 & 6.146 & 0.081 \\
60  & Target-info & 6.145 & 6.181 & 0.081 \\
60  & Random      & 8.945 & 8.851 & 0.117 \\
180 & NAOD        & 3.420 & 3.361 & 0.053 \\
180 & Target-info & 3.430 & 3.373 & 0.051 \\
180 & Random      & 5.287 & 5.191 & 0.082 \\
360 & NAOD        & 2.459 & 2.376 & 0.029 \\
360 & Target-info & 2.461 & 2.385 & 0.030 \\
360 & Random      & 3.293 & 3.224 & 0.046 \\
\hline
\end{tabular*}
\end{table}

Table~\ref{tab:controlled-finite-pool} shows that the realized-subset criterion tracks empirical policy loss without requiring the rounded subset to approximate a fixed-support fractional allocation. Across the \(24\) pools, the largest NAOD Frank--Wolfe gap is \(4.0\times10^{-8}\) and the largest integer certificate is \(1.1\times10^{-6}\). The corresponding Target-info maxima are \(5.7\times10^{-9}\) and \(1.2\times10^{-7}\). These values certify numerical optimization of each frozen criterion, not statistical uncertainty.

As a numerical check, we also compared the projected one-step estimator with a fully converged constrained joint likelihood on paired datasets. Their scaled-risk differences decreased with sample size and were negligible relative to the acquisition gaps.

\section{Chatbot Arena Evaluation}
\label{app-arena}

\subsection{Data, Representations, and Experimental Protocol}
\label{app-arena-protocol}

\paragraph{Frozen archive and cluster construction.}
We use the frozen revision of the Chatbot Arena LLM-judge archive. The raw archive contains \(49{,}938\) rows. After removing non-string or empty text and identical-response comparisons, \(49{,}635\) usable rows remain in \(42{,}875\) connected text clusters. Text is Unicode NFKC normalized and whitespace is collapsed. Connected components are formed before filtering by transitive exact-text matching on the prompt or either response. Cluster identity is the ownership unit for splitting and the capacity unit for acquisition.

\begin{table}[htbp]
\centering
\small
\setlength{\tabcolsep}{5pt}
\renewcommand{\arraystretch}{1.12}
\caption{\textbf{Arena cluster partition within each repeated split.} Roles are cluster-disjoint. Candidate clusters may contain multiple rows, but at most one row from a connected cluster can be acquired.}
\label{tab:arena-roles}
\begin{tabular}{@{}p{0.29\linewidth}c p{0.49\linewidth}@{}}
\hline
Role & Clusters & Use \\
\hline
Upstream & \(8{,}575\) & Target representation, nuisance representation, and human reference \\
Historical initialization & \(128\) & Initial target and nuisance estimates \\
Policy support & \(1{,}024\) & Policy weighting covariates with no labels \\
Current human reservoir & \(512\) & Nested trusted budgets \(H\leq512\) \\
Candidate & \(24{,}061\) & Judge acquisition with cluster capacity one \\
Test & \(8{,}575\) & Held-out human evaluation \\
\hline
\end{tabular}
\end{table}

Table~\ref{tab:arena-roles} summarizes these role assignments. Upstream, historical initialization, policy support, and the human reservoir use one representative row from each assigned cluster. Candidate retains all eligible rows subject to cluster capacity one. Test performance is first averaged within connected clusters. We use \(15\) prespecified repeated cluster-level splits. Each split relearns the downstream representations and human reference, while all \(15\) budget combinations within a split share these upstream objects.

\paragraph{Target and nuisance representations.}
Response features are extracted with Qwen3-Embedding-0.6B. The pair representation is the difference between the A and B response embeddings. An upstream-fitted whitened PCA reduces this representation to \(128\) coordinates.

For each judge, we fit four ridge heads separately to human preferences and to that judge's hard preferences. The leading two-dimensional singular-vector span of the resulting coefficient vectors defines the target representation \(X_J\). Policy-weighted residual learning is performed separately. The resulting standardized swap-antisymmetric residual score \(r_J(z)\) defines the nuisance representation \(W_J(z)=(1,r_J(z))^\top\). Thus the target and nuisance representations are learned from upstream data and frozen before current acquisition. Table~\ref{tab:arena-config} records the frozen numerical configuration used across methods.

\begin{table}[htbp]
\centering
\small
\setlength{\tabcolsep}{5pt}
\renewcommand{\arraystretch}{1.12}
\caption{\textbf{Frozen Arena configuration.} All acquisition methods use the same representation, initialization, current budgets, and final estimator unless required otherwise by the method definition.}
\label{tab:arena-config}
\begin{tabular}{@{}p{0.32\linewidth}p{0.63\linewidth}@{}}
\hline
Component & Configuration \\
\hline
Embedding & Qwen3-Embedding-0.6B, last-token pooling, \(L_2\) normalization \\
Maximum input & \(2{,}048\) tokens with prompt cap \(512\) \\
Base reduction & Upstream-fitted whitened PCA, \(128\) dimensions \\
Target representation & Judge-specific SVD span, dimension \(d=2\) \\
Target-head penalties & \(32,128,512,2048\) \\
Nuisance representation & Intercept and standardized learned residual, dimension \(r=2\) \\
Residual learner & ExtraTrees, \(128\) trees, depth \(10\), min leaf \(20\), feature fraction \(0.5\), weight clip \([0.1,10]\) \\
Inner residual folds & \(4\) \\
Human budgets & \(H\in\{32,64,128,256,512\}\) \\
Judge budgets & \(B\in\{256,512,1024\}\) \\
Parameter boxes & \(\|\theta\|_\infty\leq20\), \(\|a\|_\infty\leq10\) \\
Final step size & \(\alpha=1\) \\
Finite-pool solver & Frank--Wolfe cap \(180\), tolerance \(10^{-5}\) \\
Cluster capacity & At most one selected comparison per connected cluster \\
\hline
\end{tabular}
\end{table}

\paragraph{Feedback and common estimator.}
Human outcomes are encoded as \(1\) for A, \(0\) for B, and \(1/2\) for ties. For judge feedback, the archived A/B/tie probabilities are converted to the soft target \(y_J=p_A+\tfrac12 p_T\). The logistic quasi-likelihood therefore uses the archived graded preference signal directly.

A common historical set of \(128\) comparisons supplies the initial target and nuisance estimates before current acquisition. After current trusted feedback and the selected judge feedback are revealed, every acquisition method uses the same unpenalized joint observed-information update and the same fixed parameter boxes. Historical initialization rows are not reused in the final score. NAOD and Target-info use the same charged feasibility anchors. Their comparison therefore changes the acquisition criterion while holding the representation, budgets, and final estimator fixed.

\paragraph{Resource accounting.}
The varying current resource is \((H,B)\). The \(8{,}575\) upstream human--judge comparisons used for representation learning and the \(128\) historical comparisons used for initialization are shared across acquisition rules and are not counted in \(H\) or \(B\). Policy support contributes covariates without labels, while test human outcomes are used only for evaluation.

\subsection{Metrics, Baselines, and Statistical Analysis}
\label{app-arena-analysis}

\paragraph{Acquisition baselines.}
Random samples feasible candidate comparisons uniformly. Entropy prioritizes candidates with high predictive uncertainty. D-opt and PA D-opt use Fisher-information acquisition on the common judge-specific target representation, with PA D-opt additionally incorporating the historical initialization as past information. Target-info minimizes the policy-weighted target-information criterion without nuisance adjustment. NAOD instead uses effective target information after nuisance adjustment. Initial performs no current update. Human-only updates from the common initialization using current trusted feedback without acquired judge outcomes.

\paragraph{Evaluation endpoints.}
The primary endpoint is proxy policy regret \(D_F(\theta_{\mathrm{ref}},\widehat\theta)\), where \(\theta_{\mathrm{ref}}\) is fitted only from upstream human feedback. We additionally report held-out human cross-entropy and tie-aware human choice accuracy. Cross-entropy uses the full predicted probability. For hard choice accuracy, we predict A when the fitted preference probability is at least \(0.5\) and B otherwise. Human ties receive half credit.

\paragraph{Aggregation and paired uncertainty.}
Test losses are first averaged within connected clusters and then equally across clusters. Judges are macro-averaged within each split. For overall results, the \(15\) budget combinations are also equally weighted. The repeated split is the inference unit. For loss endpoints, paired gain is baseline minus NAOD. For accuracy, paired gain is NAOD minus baseline. Positive values therefore favor NAOD throughout. Two-sided \(95\%\) Student \(t\) intervals are computed from the \(15\) paired split-level differences using \(t_{14}\). Budget and judge intervals are pointwise intervals. Relative proxy-regret reduction is \(100(1-\overline R_{\mathrm{NAOD}}/\overline R_b)\), which is a ratio of aggregated mean regrets rather than an average of split-specific percentage reductions.

\subsection{Complete Results and Diagnostics}
\label{app-arena-results}

\paragraph{Paired comparisons across methods.}
The main text reports absolute endpoint means. Table~\ref{tab:arena-paired-overall} reports the corresponding paired split-level effects and their uncertainty.

\begin{table}[htbp]
\centering
\small
\setlength{\tabcolsep}{5pt}
\renewcommand{\arraystretch}{1.12}
\caption{\textbf{Overall paired Arena comparisons.} Positive values favor NAOD. Proxy-regret and human-CE gains are in \(10^{-3}\) units. Accuracy gains are percentage points. Brackets give paired \(95\%\) \(t_{14}\) intervals across the \(15\) repeated splits.}
\label{tab:arena-paired-overall}
\begin{tabular*}{\linewidth}{@{\extracolsep{\fill}}l c c c}
\hline
Baseline & Proxy-regret gain & Human-CE gain & Accuracy gain \\
\hline
Initial
& \(+6.870\,[3.020,10.721]\)
& \(+5.308\,[1.952,8.664]\)
& \(+0.448\,[0.117,0.780]\) \\
Human-only
& \(+16.259\,[8.849,23.668]\)
& \(+12.981\,[7.214,18.748]\)
& \(+1.426\,[0.331,2.520]\) \\
Random
& \(+2.886\,[2.455,3.318]\)
& \(+2.814\,[2.388,3.241]\)
& \(+0.050\,[-0.069,0.169]\) \\
Entropy
& \(+11.433\,[4.783,18.083]\)
& \(+8.527\,[3.669,13.385]\)
& \(+1.166\,[0.079,2.252]\) \\
D-opt
& \(+0.463\,[0.093,0.833]\)
& \(+0.385\,[0.011,0.759]\)
& \(+0.075\,[0.028,0.121]\) \\
PA D-opt
& \(+0.465\,[0.095,0.834]\)
& \(+0.386\,[0.013,0.760]\)
& \(+0.075\,[0.029,0.121]\) \\
Target-info
& \(+0.818\,[0.348,1.287]\)
& \(+0.531\,[0.211,0.850]\)
& \(+0.058\,[0.015,0.101]\) \\
\hline
\end{tabular*}
\end{table}

Relative to Target-info, the proxy-regret gain is accompanied by lower held-out human cross-entropy and higher held-out choice accuracy. The D-opt and PA D-opt differences are smaller but remain positive in the overall paired analysis. The smaller accuracy gain is expected because hard choice accuracy changes only when the predicted preference crosses the \(0.5\) threshold, whereas cross-entropy remains sensitive to probability improvements that preserve the predicted winner. The resulting accuracy gain is positive in \(12\) of \(15\) repeated splits.

\paragraph{Budget dependence.}
Figure~\ref{fig-arena-budgets} reports relative proxy-regret reductions over the complete \(5\times3\) budget grid. Table~\ref{tab:arena-budget-paired} reports the paired absolute gains and uncertainty for Target-info and D-opt.

\begin{figure}[htbp]
\centering
\includegraphics[width=\linewidth]{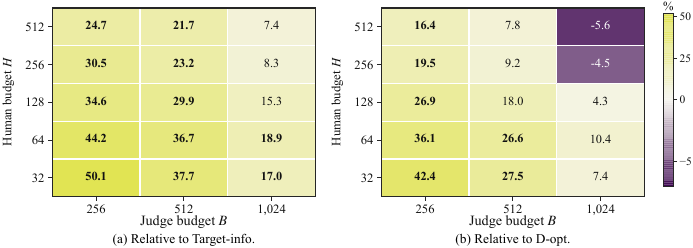}
\vspace{-7mm}
\caption{\textbf{Budget dependence on Chatbot Arena.} NAOD relative reduction in mean proxy policy regret across the complete \(5\times3\) Arena budget grid. Positive values favor NAOD. Panel (a) compares against Target-info and panel (b) against D-opt. Cell values are ratios of macro-averaged mean regrets rather than averages of split-specific percentage reductions.}
\label{fig-arena-budgets}
\end{figure}

\begin{table}[t]
\centering
\small
\setlength{\tabcolsep}{4pt}
\renewcommand{\arraystretch}{1.10}
\caption{\textbf{Paired proxy-regret gains across the Arena budget grid.} Entries are baseline-minus-NAOD gains in \(10^{-3}\) units with pointwise paired \(95\%\) \(t_{14}\) intervals.}
\label{tab:arena-budget-paired}
\begin{tabular*}{\linewidth}{@{\extracolsep{\fill}}c c c c}
\hline
\(H\) & \(B\) & Target-info--NAOD & D-opt--NAOD \\
\hline
32  & 256  & \(+2.103\,[0.263,3.944]\)  & \(+1.543\,[0.612,2.474]\) \\
32  & 512  & \(+1.141\,[0.447,1.835]\)  & \(+0.715\,[0.138,1.291]\) \\
32  & 1024 & \(+0.408\,[0.143,0.673]\)  & \(+0.160\,[-0.192,0.512]\) \\
64  & 256  & \(+1.731\,[0.507,2.955]\)  & \(+1.234\,[0.359,2.109]\) \\
64  & 512  & \(+1.123\,[0.476,1.770]\)  & \(+0.703\,[0.125,1.281]\) \\
64  & 1024 & \(+0.471\,[0.184,0.758]\)  & \(+0.234\,[-0.119,0.587]\) \\
128 & 256  & \(+1.172\,[0.205,2.138]\)  & \(+0.814\,[0.062,1.565]\) \\
128 & 512  & \(+0.843\,[0.071,1.615]\)  & \(+0.434\,[-0.117,0.986]\) \\
128 & 1024 & \(+0.367\,[-0.044,0.778]\) & \(+0.092\,[-0.303,0.487]\) \\
256 & 256  & \(+0.844\,[0.211,1.477]\)  & \(+0.466\,[0.008,0.924]\) \\
256 & 512  & \(+0.553\,[0.221,0.885]\)  & \(+0.186\,[-0.222,0.594]\) \\
256 & 1024 & \(+0.176\,[-0.044,0.396]\) & \(-0.084\,[-0.405,0.238]\) \\
512 & 256  & \(+0.664\,[0.203,1.124]\)  & \(+0.395\,[0.001,0.790]\) \\
512 & 512  & \(+0.515\,[0.199,0.831]\)  & \(+0.157\,[-0.186,0.500]\) \\
512 & 1024 & \(+0.155\,[-0.044,0.354]\) & \(-0.103\,[-0.369,0.162]\) \\
\hline
\end{tabular*}
\end{table}

The Target-info mean comparison favors NAOD in all \(15\) budget cells, with \(12\) pointwise paired intervals entirely above zero. At every fixed human budget, the relative gain narrows as the judge budget increases. The pattern is not determined by \(H/B\) alone. Along \(H/B=1/8\), the relative reductions are \(50.1\%\), \(36.7\%\), and \(15.3\%\) for \((H,B)=(32,256),(64,512),(128,1024)\). Selection overlap increases from \(0.446\) to \(0.505\) to \(0.568\) over the same sequence. Absolute information scale and finite-pool geometry therefore matter in addition to the human-to-judge budget ratio.

The D-opt gap narrows more sharply at large judge budgets. At \((256,1024)\) and \((512,1024)\), the mean difference changes sign, while both paired intervals include zero. The separation from D-opt is therefore budget dependent rather than uniform across the grid.

\paragraph{Judge-level target--nuisance coupling.}
Figure~\ref{fig-arena-coupling} summarizes the judge-level association between Target-info coupling and realized policy gain, while Table~\ref{tab:arena-judge} reports the corresponding coupling values and paired policy gains with pointwise \(95\%\) intervals.

\begin{figure}[htbp]
\centering
\includegraphics[width=\linewidth]{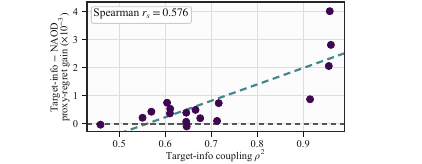}
\vspace{-7mm}
\caption{\textbf{Judge-level coupling and realized acquisition gain.} The horizontal axis is Target-info target--nuisance coupling \(\rho^2\) and the vertical axis is the Target-info-minus-NAOD proxy-regret gain in \(10^{-3}\) units. Each point represents one judge after averaging over the \(15\) budget combinations and \(15\) repeated cluster-level splits. The Spearman rank correlation is \(r_s=0.576\). Positive vertical values favor NAOD.}
\label{fig-arena-coupling}
\end{figure}

\begin{table}[t]
\centering
\small
\setlength{\tabcolsep}{4pt}
\renewcommand{\arraystretch}{1.08}
\caption{\textbf{Judge-level coupling and policy gain.} \(\rho_T^2\) is the squared target--nuisance coupling under Target-info. Policy gain is Target-info minus NAOD in \(10^{-3}\) units with pointwise paired \(95\%\) \(t_{14}\) intervals.}
\label{tab:arena-judge}
\begin{tabular*}{\linewidth}{@{\extracolsep{\fill}}l c c}
\hline
Judge & \(\rho_T^2\) & Policy gain \\
\hline
Athene-70B & 0.570 & \(+0.428\,[-0.290,1.145]\) \\
dolphin-2.1-mistral-7b & 0.956 & \(+2.063\,[1.076,3.049]\) \\
dolphin-2.5-mixtral-8x7b & 0.960 & \(+2.819\,[1.248,4.390]\) \\
Hermes-3-Llama-3.1-70B & 0.610 & \(+0.366\,[-0.305,1.038]\) \\
Meta-Llama-3-70B-Instruct & 0.459 & \(-0.037\,[-0.358,0.285]\) \\
Meta-Llama-3-8B-Instruct & 0.716 & \(+0.733\,[0.069,1.397]\) \\
Meta-Llama-3-8B & 0.958 & \(+4.024\,[1.184,6.865]\) \\
Mistral-7B-Instruct-v0.1 & 0.713 & \(+0.096\,[-0.035,0.227]\) \\
Mistral-7B-Instruct-v0.2 & 0.666 & \(+0.491\,[0.129,0.854]\) \\
Mistral-7B-OpenOrca & 0.646 & \(+0.067\,[-0.151,0.285]\) \\
Mixtral-8x7B-Instruct-v0.1 & 0.604 & \(+0.749\,[0.332,1.166]\) \\
OpenHermes-2-Mistral-7B & 0.676 & \(+0.194\,[-0.141,0.529]\) \\
OpenHermes-2.5-Mistral-7B & 0.611 & \(+0.531\,[0.153,0.909]\) \\
Qwen2-72B-Instruct & 0.551 & \(+0.211\,[-0.252,0.673]\) \\
StableBeluga-7B & 0.915 & \(+0.873\,[0.234,1.511]\) \\
Starling-LM-7B-alpha & 0.646 & \(+0.391\,[0.045,0.736]\) \\
zephyr-7b-beta & 0.647 & \(-0.099\,[-0.283,0.084]\) \\
\hline
\end{tabular*}
\end{table}

Mean policy gain is positive for \(15\) of the \(17\) judges. The largest gains occur for Meta-Llama-3-8B and the two dolphin judges, whose Target-info coupling values all exceed \(0.95\). By contrast, Meta-Llama-3-70B-Instruct has strong residual predictability but substantially weaker coupling and essentially no policy gain. This contrast separates residual learnability from target--nuisance interference. The Spearman association in Figure~\ref{fig-arena-coupling} summarizes this judge-level relationship.

Across all budget and split designs, mean squared coupling decreases from \(0.700\) under Target-info to \(0.290\) under NAOD. The mean selected-set overlap is \(0.521\). NAOD therefore changes which comparisons are acquired in a direction that reduces target--nuisance coupling rather than applying a different estimator to essentially the same selected set.

\paragraph{Representation and numerical diagnostics.}
All \(255=15\times17\) split-by-judge nuisance fits achieve positive out-of-fold soft-label cross-entropy gain, with mean gain \(0.0370\). The learned nuisance representation therefore captures systematic judge residual structure across the full evaluation.

The audit finds no budget, duplicate-ID, duplicate-cluster, out-of-pool, projection, or design failures in \(22{,}950\) acquisition arrays. Across \(7{,}650\) NAOD and Target-info designs, the maximum Frank--Wolfe gap is \(8.62\times10^{-8}\) and the maximum integer certificate is \(6.01\times10^{-7}\), indicating that the observed acquisition differences are not explained by unresolved selection optimization.

\end{document}